\newcommand{\todob}[2][]{\todo[color=Cyan!20,size=\tiny,inline,#1]{B: #2}} % Brano's comments
\newcommand{\todoc}[2][]{\todo[color=Apricot!20,size=\tiny,#1]{Cs: #2}} % Csaba's comments
\newcommand{\todos}[2][]{\todo[color=Yellow!20,size=\tiny,#1]{S: #2}} % Sumeet's comments
\newcommand{\commentout}[1]{}
\newcommand{\junk}[1]{}
\newcommand{\etal}{\emph{et al.}}
\newtheorem{theorem}{Theorem}
\newtheorem{lemma}{Lemma}
\newcommand{\cE}{\mathcal{E}}
\newcommand{\ccE}{\overline{\cE}}
\newcommand{\cF}{\mathcal{F}}
\newcommand{\cH}{\mathcal{H}}
\newcommand{\ceils}[1]{\left\lceil#1\right\rceil}
\newcommand{\condE}[2]{\mathbb{E} \left[#1 \,\middle|\, #2\right]}
\newcommand{\E}[1]{\mathbb{E} \left[#1\right]}
\newcommand{\I}[1]{\mathds{1} \! \left\{#1\right\}}
\newcommand{\rnd}[1]{\mathbf{#1}}
\newcommand{\set}[1]{\left\{#1\right\}}
\DeclareMathOperator*{\argmax}{arg\,max\,}
\DeclareMathOperator*{\argmin}{arg\,min\,}
\mathchardef\mhyphen="2D
\newcommand{\rklucb}{{\tt Rank1ElimKL}}
\newcommand{\bilinucb}{{\tt Rank1Elim}}
\newcommand{\klucb}{{\tt KL\mhyphen UCB}}
\newcommand{\ucb}{{\tt UCB1}}
\newcommand{\ucbelim}{{\tt UCB1Elim}}
\newcommand{\pmx}{p_{\max}}
\title{Bernoulli Rank-$1$ Bandits for Click Feedback}
\author{Sumeet Katariya \\ University of Wisconsin-Madison \\ \emph{katariya@wisc.edu}
\And Branislav Kveton \\ Adobe Research \\ \emph{kveton@adobe.com}
\And Csaba Szepesv\'ari \\ University of Alberta \\ \emph{szepesva@cs.ualberta.ca}
\AND Claire Vernade \\ Telecom ParisTech \\ \emph{claire.vernade@telecom-paristech.fr}
\And Zheng Wen \\ Adobe Research \\ \emph{zwen@adobe.com}}
\begin{document}

\maketitle

\begin{abstract}
The probability that a user will click a search result depends both on its relevance and its position on the results page. The \emph{position based model} explains this behavior by ascribing to every item an \emph{attraction} probability, and to every position an \emph{examination} probability. To be clicked, a result must be both attractive and examined. The probabilities of an item-position pair being clicked thus form the entries of a rank-$1$ matrix. We propose the learning problem of a \emph{Bernoulli rank-$1$ bandit} where at each step, the learning agent chooses a pair of row and column arms, and receives the product of their Bernoulli-distributed values as a reward.
This is a special case of the stochastic rank-$1$ bandit problem considered in recent work that proposed
an elimination based algorithm $\bilinucb$, and showed that $\bilinucb$'s regret scales
linearly with the number of rows and columns on ``benign'' instances. These are the instances 
where the minimum of the average row and column
rewards $\mu$ is bounded away from zero. The issue with $\bilinucb$ is that it fails to be competitive 
with straightforward bandit strategies
%that do not take the rank-$1$ structure of the problem in account 
as $\mu \to 0$.
%An ideal algorithm of course should satisfy both of these requirements. 
In this paper we propose $\rklucb$ which simply replaces the (crude) confidence
intervals of $\bilinucb$ with confidence intervals based on Kullback-Leibler (KL) divergences, and with the help of a novel 
result concerning the scaling of KL divergences we prove that with this change,
our algorithm will be competitive no matter the value of $\mu$. 
Experiments with synthetic data confirm that on benign instances
	the performance of $\rklucb$ is significantly better than that of even $\bilinucb$, 
while experiments with models derived from real-data
confirm that the improvements are significant across the board, regardless of
whether the data is benign or not.
\if0
We  demonstrate on synthetic data
that the new algorithm indeed significantly improves upon $\bilinucb$, while using models derived from real-world data
we also confirm that instances where $\mu$ is both small and large do in fact appear in practice,
and that $\rklucb$ is indeed competitive with previous algorithms across all types of instances that may appear in practice.
\fi
\if0
This problem is hard for real-world click data, because most items have small attraction probabilities and most positions have small examination probabilities; so the learning agent usually receives a reward of $0$ on choosing such items or positions, and cannot deduce the individual row and column rewards. In this work, we propose an efficient algorithm $\rklucb$ for solving our learning problem; and derive a gap-dependent upper bound on its regret which scales as $1 / \mu$, where $\mu$ is the minimum of the average row and column rewards, and characterizes the hardness discussed above. We evaluate our algorithm on synthetic and real-world data, and show that it outperforms existing algorithms. This is the first bandit algorithm for finding the maximum entry of a rank-$1$ matrix whose regret is linear in $1 / \mu$, which is an important feature for real-world click problems.
\fi
\end{abstract}

%!TEX root = Paper.tex

\section{Introduction}
\label{sec:introduction}

When deciding which search results to present, click logs are of particular interest. A fundamental problem in click data is position bias. The probability of an element being clicked depends not only on its relevance, but also on its position on the results page. The position-based model (PBM), first proposed by Richardson \etal~\shortcite{richardson07predicting}, and then formalized by Craswell \etal~\shortcite{craswell08experimental}, models this behavior by associating with each item a probability of being \emph{attractive}, and with each position a probability of being \emph{examined}. To be clicked, a result must be both attractive and examined. Given click logs, the attraction and examination probabilities can be learned using the maximum-likelihood estimation (MLE) or the expectation-maximization (EM) algorithms \cite{chuklin15click}. 

An online learning model for this problem is proposed in Katariya \etal~\shortcite{rank1stochastic}, called \emph{stochastic rank-$1$ bandit}. The objective of the learning agent is to learn the most rewarding item and position, which is the maximum entry of a rank-$1$ matrix. At time $t$, the agent chooses a pair of row and column arms, and receives the product of their values as a reward. The goal of the agent is to maximize its expected cumulative reward, or equivalently to minimize its expected cumulative regret with respect to the optimal solution, the most rewarding pair of row and column arms. This learning problem is challenging because when the agent receives a reward of $0$, it could mean either that the item was unattractive, or the position was left unexamined, or both. 

Katariya \etal~\shortcite{rank1stochastic} also proposed an elimination algorithm, $\bilinucb$, whose regret is $\mathcal{O}( (K+L)\, \mu^{-2}\Delta^{-1} \log n ) $, where $K$ is the number of rows, $L$ is the number of columns, $\Delta$ is the minimum of the row and column gaps, and $\mu$ is the minimum of the average row and column rewards. When $\mu$ is bounded away from zero, the regret scales linearly with $K+L$, while it scales inversely with $\Delta$. This is a significant improvement to using a standard bandit algorithm that (disregarding the problem structure)  would treat item-position pairs as unrelated arms and would achieve a regret of $O(KL\Delta^{-1})$. The issue is that as $\mu$ gets small, the regret bound worsens significantly.
As we verify in \cref{sec:experiments} this indeed happens on models derived from some real-world problems.
% it does happen that most items are not attractive and most positions are not examined, and so the value of $\mu$ can be quite small, in which case the regret of $\bilinucb$ may be overly large.
To illustrate the severity of this problem, 
consider as an example the setting when $K=L$ and the row and column rewards are Bernoulli distributed. 
Let the mean reward of row $1$ and column $1$ be $\Delta$, and the mean reward of all other rows and columns be $0$. We refer to this setting as a `needle in a haystack', because there is a single rewarding entry out of $K^2$ entries. For this setting, $\mu = \Delta/K$, and consequently the regret of $\bilinucb$ is  $\mathcal{O}(\mu^{-2} \Delta^{-1} K\log n) = 
\mathcal{O}(K^3\log n)$. However, a naive bandit algorithm that ignores the rank-$1$ structure and treats each row-column pair as unrelated arms has $\mathcal{O}(K^2 \log n)$ regret.%
\footnote{Alternatively, the worst-case regret bound for $\bilinucb$ becomes $O(K n^{2/3} \log(n))$, while that of for a naive
bandit algorithm with a naive bound is $O(K n^{1/2} \log(n))$.}
 While a naive bandit algorithm is unable to exploit the rank-$1$ structure when $\mu$ is large, $\bilinucb$ is unable to keep up 
 with a naive algorithm when $\mu$ is small. 
% The `needle in a haystack' is an extreme problem, and many real-world problems have more structure. 
Our goal in this paper is to derive an algorithm that performs well across all rank-$1$ problem instances regardless of their parameters.
% What this example illustrates is that $\bilinucb$ is unable to exploit this structure when $\mu$ is small.

%We next illustrate a typical setting where this regret is suboptimal. Consider the setting where the row and column rewards are Bernoulli distributed, and for simplicity assume $K = L$. Assume that the mean reward of row $1$ and column $1$ is $1$, and the mean reward of all other rows and columns is $0$. This setting, where there is a highly attractive item and a highly examined position is very common in real-world settings \cite{chuklin15click}. We refer to problems that can be modeled using this setting as `needle in a haystack', because there is a single rewarding entry out of $K^2$ entries. In this setting, the regret of $\bilinucb$ is $\mathcal{O}(K^3)$. \todob{Why is the regret of $\bilinucb$ this high?} However, a naive bandit algorithm that ignores the rank-$1$ structure and treats each row-column pair as an independent arm has $\mathcal{O}(K^2)$ regret. This shows that either $\bilinucb$ is suboptimal, or its analysis in Katariya \etal~\shortcite{rank1stochastic} is not tight.

In this paper we propose that this improvement can be achieved by replacing the ``$\ucb$ confidence intervals''  used by  $\bilinucb$ 
%that are based on subgaussian tail bounds and that are
 by strictly tighter confidence intervals based on Kullback-Leibler (KL) divergences.
 This leads to our algorithm that we call $\rklucb$.
Based on the work  of Garivier and Cappe \shortcite{garivier11klucb},
we expect this change to lead to an improved behavior, especially, for extreme instances, e.g., as $\mu \to 0$.
Indeed, in this paper we show that KL divergences enjoy a 
peculiar ``scaling'' property, which leads to a significant improvement.
 % between Bernoulli distributions (that are the basis of the $\klucb$ bandit algorithm due to  \cite{garivier11klucb}). 
%This is especially suited for click data modeled using the PBM, because the product of two independent Bernoulli random variables is Bernoulli distributed. 
%The necessary $\bilinucb$ uses $\ucb$ confidence intervals, which are obtained using the Kullback-Leibler divergence between normal distributions. The $\klucb$ confidence intervals, obtained using the Kullback-Leibler divergence between Bernoulli distributions are strictly inside the $\ucb$ confidence intervals, for any bounded distribution \cite{garivier11klucb}. 
%We propose an algorithm, $\rklucb$, that uses $\klucb$ confidence intervals. 
In particular, thanks to this improvement, for the `needle in a haystack' problem discussed above 
the regret of $\rklucb$ becomes $\mathcal{O}(K^2 \log(n))$.

In summary our contributions are as follows:
 First, we propose a \emph{Bernoulli rank-$1$ bandit}, which is a special class of a \emph{stochastic rank-$1$ bandit} where the rewards are Bernoulli distributed. \todoc{I am pretty sure the Bernoulli assumption is not needed.} This has wide applications in click models and we believe that it deserves special attention. Second, we modify $\bilinucb$ for solving the Bernoulli rank-$1$ bandit, which we call $\rklucb$, to use $\klucb$ intervals.
% $\rklucb$ is similar in flavor to $\bilinucb$, where the key idea is to explore all remaining rows and columns randomly over all remaining columns and rows, respectively, to estimate their expected rewards; and then eliminate those rows and columns that seem suboptimal. 
Third, we derive a $O((K+L)\,(\mu\gamma\Delta)^{-1} \log n)$ gap-dependent upper bound on the $n$-step regret of $\rklucb$, where $K,L,\Delta$ and $\mu$ are as above, while $\gamma = \max\set{\mu,1-\pmx}$ with $\pmx$ being the maximum of the row and column rewards; effectively replacing the $\mu^{-2}$ term of the previous regret bound of $\bilinucb$ with $(\mu\gamma)^{-1}$. It follows that the new bound is an unilateral improvement over the previous one and is a strict improvement when $\mu < 1-\pmx$, which is expected to happen quite often in practical problems.
%where we often find that $\mu$ is low (most items are unattractive and most positions are not examined) while $\pmx$ is also expected to be low since in practice we often find that even the best item and position are at best ``moderately good''.
For the `needle in a haystack' problem the new bound essentially matches that of the naive bandit algorithm's bound, while never worsening the bound of $\bilinucb$. 
%This is a significant improvement because $\mu$ is typically very low in click data - most items are not attractive and most positions are not examined. Furthermore, no item is always attractive and no position is always examined, and so typically $\pmx \ll 1$. Hence $1 - \pmx \gg \mu$. 
Our final contribution is the experimental validation of $\rklucb$, on both synthetic and real-world problems.
The experiments indicate that $\rklucb$ outperforms several baselines across almost all problem instances.

%\todob{There is a fundamental problem in how we present our problem of interest. The ``needle in haystack'' problem does not have any structure and $\ucb$ is optimal. Therefore, this is not our problem of interest. Our problems of interest are problems that are ``close'' to this problem.}

We denote random variables by boldface letters and define $[n] = \set{1, \dots, n}$. For any sets $A$ and $B$, we denote by $A^B$ the set of all vectors whose entries are indexed by $B$ and take values from $A$. We let $d(p,q) = p \log \frac{p}{q} + (1-p) \log \frac{1-p}{1-q}$ denote the KL divergence between the Bernoulli distributions with means $p,q\in [0,1]$. As usual, the formula for $d(p,q)$ is defined through its continuous extension as $p,q$ approach the boundaries of $[0,1]$.

%!TEX root = Paper.tex

\section{Setting}
\label{sec:setting}

The setting of the \emph{Bernoulli rank-$1$ bandit} is the same as that of the stochastic rank-$1$ bandit \cite{rank1stochastic}, with the additional requirement that the row and column rewards are Bernoulli distributed. We state the setting for completeness, and borrow the notation from Katariya \etal~\shortcite{rank1stochastic} for the ease of comparison.

An instance of our learning problem is a tuple $B = (K, L, P_\textsc{u}, P_\textsc{v})$, where $K$ is the number of rows, $L$ is the number of columns, $P_{\textsc{u}}$ is a distribution over $\set{0, 1}^K$ from which the row rewards are drawn, and $P_{\textsc{v}}$ is a distribution over $\set{0, 1}^L$ from which the column rewards are drawn. 

Let the row and column rewards be
\begin{align*}
	(\rnd{u}_t,\rnd{v}_t) \stackrel{\text{i.i.d}}\sim P_{\textsc{u}} \otimes P_{\textsc{v}} \,,\qquad t = 1,\dots,n\,.
%    \rnd{u}_t \stackrel{\text{i.i.d}}\sim P_{\textsc{u}} \quad \forall\, t=1,\dots,n, \\
%    \rnd{v}_t \stackrel{\text{i.i.d}}\sim P_{\textsc{v}} \quad \forall\, t=1,\dots,n.
\end{align*}
In particular, $\rnd{u}_t$ and $\rnd{v}_t$ are independent at any time $t$. At time $t$, the learning agent chooses a row index $\rnd{i}_t \in [K]$ and a column index $\rnd{j}_t \in [L]$, and observes $\rnd{u}_t(\rnd{i}_t) \rnd{v}(\rnd{j}_t)$ as its reward. The indices $\rnd{i}_t$ and $\rnd{j}_t$ chosen by the learning agent are allowed to depend only on the history of the agent up to time $t$.

Let the time horizon be $n$. The goal of the agent is to maximize its expected cumulative reward in $n$ steps. This is equivalent to minimizing the \emph{expected cumulative regret} in $n$ steps
\begin{align*}
  R(n) = \E{\sum_{t = 1}^n R(\rnd{i}_t, \rnd{j}_t, \rnd{u}_t, \rnd{v}_t)}\,,
\end{align*}
where $R(\rnd{i}_t, \rnd{j}_t, \rnd{u}_t, \rnd{v}_t) = \rnd{u}_t(i^\ast) \rnd{v}_t(j^\ast) - \rnd{u}_t(\rnd{i}_t) \rnd{v}_t(\rnd{j}_t)$ is the \emph{instantaneous stochastic regret} of the agent at time $t$, and
\begin{align*}
  \textstyle
  (i^\ast, j^\ast) = \argmax_{(i, j) \in [K] \times [L]} \E{\rnd{u}(i) \rnd{v}(j)}
\end{align*}
is the \emph{optimal solution} in hindsight of knowing $P_\textsc{u}$ and $P_\textsc{v}$.

%!TEX root = Paper.tex

\section{$\rklucb$ Algorithm}
\label{sec:algorithm}

\begin{algorithm}[t!]
	\caption{$\rklucb$ for Bernoulli rank-$1$ bandits.}
	\label{alg:main}
	\begin{algorithmic}[1]
		\STATE // Initialization
        \STATE $t \gets 1$, \ $\tilde{\Delta}_0 \gets 1$, $n_{-1} \gets 0$
		\STATE $\rnd{C}^\textsc{u}_0 \gets \set{0}^{K \times L}$, \ 
		$\rnd{C}^\textsc{v}_0 \gets \set{0}^{K \times L}$,
		\STATE $\rnd{h}^\textsc{u}_0 \gets (1, \dots, K)$, \ 
		$\rnd{h}^\textsc{v}_0 \gets (1, \dots, L)$
		\STATE 
		\FORALL{$\ell = 0, 1, \dots$}
		\STATE $n_\ell \gets \ceils{16 \tilde{\Delta}_\ell^{-2} \log n}$
		\STATE $\rnd{I}_\ell \gets \bigcup_{i \in [K]} 
		\set{\rnd{h}^\textsc{u}_\ell(i)}$, \
		$\rnd{J}_\ell \gets \bigcup_{j \in [L]} 
		\set{\rnd{h}^\textsc{v}_\ell(j)}$
		\STATE
		\STATE // Row and column exploration
		\FOR{$n_\ell - n_{\ell - 1}$ times}
		\STATE Choose uniformly at random column $j \in [L]$
		\STATE $j \gets \rnd{h}^\textsc{v}_\ell(j)$
		\FORALL{$i \in \rnd{I}_\ell$}
		\STATE $\rnd{C}^\textsc{u}_\ell(i, j) \gets \rnd{C}^\textsc{u}_\ell(i, 
		j) + \rnd{u}_t(i) \rnd{v}_t(j)$
		\STATE $t \gets t + 1$
		\ENDFOR
		\STATE Choose uniformly at random row $i \in [K]$
		\STATE $i \gets \rnd{h}^\textsc{u}_\ell(i)$
		\FORALL{$j \in \rnd{J}_\ell$}
		\STATE $\rnd{C}^\textsc{v}_\ell(i, j) \gets \rnd{C}^\textsc{v}_\ell(i, 
		j) + \rnd{u}_t(i) \rnd{v}_t(j)$
		\STATE $t \gets t + 1$
		\ENDFOR
		\ENDFOR
		\STATE
		\STATE // UCBs and LCBs on the expected rewards of all remaining rows 
		and columns with divergence constraint
		$\delta_{\ell} \gets \log n + 3 \log \log n$
		\STATE
		\FORALL{$i \in \rnd{I}_\ell$}
		\STATE $\hat{\rnd{u}}_\ell(i) \gets 
		(1 / n_\ell) \sum_{j = 1}^L \rnd{C}^\textsc{u}_\ell(i, j)$
        \STATE $\rnd{U}^\textsc{u}_\ell(i) \gets \argmax_{q\in [\hat{\rnd{u}}_\ell(i),1]}\left\{
		n_\ell d\left(\hat{\rnd{u}}_\ell(i) , q \right) 
        \leq \delta_{\ell} \right\} $
		\STATE $
		\rnd{L}^\textsc{u}_\ell(i) \gets \argmin_{q\in [0,\hat{\rnd{u}}_\ell(i)]}\left\{
		n_\ell d\left(\hat{\rnd{u}}_\ell(i), q 	\right) 		 
		\leq \delta_\ell \right\} $
		\ENDFOR
		\FORALL{$j \in \rnd{J}_\ell$}
		\STATE $\hat{\rnd{v}}_\ell(j) \gets 
		(1 / n_\ell) \sum_{i = 1}^K \rnd{C}^\textsc{v}_\ell(i, j)$
		\STATE $\rnd{U}^\textsc{v}_\ell(j) \gets \argmax_{q\in [\hat{\rnd{v}}_\ell(j),1]}\left\{
		n_\ell d\left(\hat{\rnd{v}}_\ell(j), q \right) 		 
		\leq \delta_\ell\right\} $
		\STATE $\rnd{L}^\textsc{v}_\ell(j) \gets \argmin_{q\in [0,\hat{\rnd{v}}_\ell(j)]}\left\{
		n_\ell d\left(\hat{\rnd{v}}_\ell(j), q	\right) 		 
		\leq \delta_\ell \right\}$
		\ENDFOR
		\STATE
		\STATE // Row and column elimination
		\STATE $\rnd{i}_\ell \gets \argmax_{i \in \rnd{I}_\ell} 
		\rnd{L}^\textsc{u}_\ell(i)$
		\STATE $\rnd{h}^\textsc{u}_{\ell + 1} \gets \rnd{h}^\textsc{u}_\ell$
		\FORALL{$i = 1, \dots, K$}
		\IF{$\rnd{U}^\textsc{u}_\ell(\rnd{h}^\textsc{u}_\ell(i)) \leq 
		\rnd{L}^\textsc{u}_\ell(\rnd{i}_\ell)$}
		\STATE $\rnd{h}^\textsc{u}_{\ell + 1}(i) \gets \rnd{i}_\ell$
		\ENDIF
		\ENDFOR
		\STATE
		\STATE $\rnd{j}_\ell \gets \argmax_{j \in \rnd{J}_\ell} 
		\rnd{L}^\textsc{v}_\ell(j)$
		\STATE $\rnd{h}^\textsc{v}_{\ell + 1} \gets \rnd{h}^\textsc{v}_\ell$
		\FORALL{$j = 1, \dots, L$}
		\IF{$\rnd{U}^\textsc{v}_\ell(\rnd{h}^\textsc{v}_\ell(j)) \leq 
		\rnd{L}^\textsc{v}_\ell(\rnd{j}_\ell)$}
		\STATE $\rnd{h}^\textsc{v}_{\ell + 1}(j) \gets \rnd{j}_\ell$
		\ENDIF
		\ENDFOR
		\STATE
		\STATE $\tilde{\Delta}_{\ell + 1} \gets \tilde{\Delta}_\ell / 2$, \
		$\rnd{C}^\textsc{u}_{\ell + 1} \gets \rnd{C}^\textsc{u}_\ell$, \
		$\rnd{C}^\textsc{v}_{\ell + 1} \gets \rnd{C}^\textsc{v}_\ell$
		\ENDFOR
	\end{algorithmic}
\end{algorithm}

The pseudocode of our algorithm, $\rklucb$, is in \cref{alg:main}. As noted earlier this algorithm is based on $\bilinucb$ \cite{rank1stochastic} with the difference that we replace their confidence intervals with KL-based confidence intervals.  For the reader's benefit, we explain the full algorithm.

$\rklucb$ is an elimination algorithm that operates in stages, where the elimination is conducted with $\klucb$ confidence intervals. The lengths of the stages quadruple from one stage to the next, and the algorithm is designed such that at the end of stage $\ell$, it eliminates with high probability any row and column whose gap scaled by a problem dependent constant %(depending on $\mu,\gamma$, which will be defined formally in \eqref{eq:average reward} and \eqref{eq:gamma}), 
is at least $\tilde{\Delta}_\ell = 2^{- \ell}$. We denote the \emph{remaining rows and columns} in stage $\ell$ by $\rnd{I}_\ell$ and $\rnd{J}_\ell$, respectively.

Every stage has an exploration phase and an exploitation phase. During row-exploration in stage $\ell$ (lines $12$--$16$), every remaining row is played with a randomly chosen remaining column, \todoc{This said remaining column -- not true. AISTATS paper needs to be updated?} \todos{We do play with remaining columns, although not uniformly random , see line 13} and the rewards are added to the table $\rnd{C}^\textsc{u}_\ell \in \mathbb{R}^{K \times L}$. Similarly, during column-exploration in stage $\ell$ (lines $17$--$21$), every remaining column is played with a randomly chosen remaining row, and the rewards are added to the table $\rnd{C}^\textsc{v}_\ell \in \mathbb{R}^{K \times L}$. We play every row (column) with the same random column (row), and separate the row and column reward tables, so that the expected rewards of any two rows (columns) are scaled by the same quantity at the end of any phase. This facilitates comparison between rows (columns) and elimination in the exploitation phase. The distributions used in selecting random columns and rows are such that the row (column) means increase over time.

In the exploitation phase, we construct high-probability $\klucb$ \cite{garivier11klucb} confidence intervals $[\rnd{L}^\textsc{u}_\ell(i), \rnd{U}^\textsc{u}_\ell(i)]$ for row $i \in \rnd{I}_\ell$, and confidence intervals $[\rnd{L}^\textsc{v}_\ell(j), \rnd{U}^\textsc{v}_\ell(j)]$ for column $j \in \rnd{J}_\ell$. As noted earlier, this is where we depart from $\bilinucb$. The elimination uses row $\rnd{i}_\ell$ and column $\rnd{j}_\ell$, where
\begin{align*}
  \rnd{i}_\ell = \argmax\limits_{i \in \rnd{I}_\ell} \rnd{L}^\textsc{u}_\ell(i)\,, \qquad
  \rnd{j}_\ell = \argmax\limits_{j \in \rnd{J}_\ell} \rnd{L}^\textsc{v}_\ell(j)\,.
\end{align*}
We eliminate any row $i$ and column $j$ such that 
\begin{align*}
  \rnd{U}^\textsc{u}_\ell(i) \le \rnd{L}^\textsc{u}_\ell(\rnd{i}_\ell)\,, \qquad
  \rnd{U}^\textsc{v}_\ell(j) \le \rnd{L}^\textsc{v}_\ell(\rnd{j}_\ell)\,.
\end{align*}
We also track the remaining rows and columns in stage $\ell$ by $\rnd{h}^\textsc{u}_\ell$ and $\rnd{h}^\textsc{v}_\ell$, respectively. When row $i$ is eliminated by row $\rnd{i}_\ell$, we set $\rnd{h}^\textsc{u}_\ell(i) = \rnd{i}_\ell$. If row $\rnd{i}_\ell$ is eliminated by row $\rnd{i}_{\ell'}$ at a later stage $\ell' > \ell$, we update $\rnd{h}^\textsc{u}_\ell(i) = \rnd{i}_{\ell'}$. This is analogous for columns. The remaining rows $\rnd{I}_\ell$ and columns $\rnd{J}_\ell$ can be then defined as the unique values in $\rnd{h}^\textsc{u}_\ell$ and $\rnd{h}^\textsc{v}_\ell$, respectively. The maps $\rnd{h}^{\textsc{u}}_\ell$ and  $\rnd{h}^{\textsc{v}}_\ell$ help to guarantee that the row and column means are nondecreasing.

The $\klucb$ confidence intervals in $\rklucb$ can be found by solving a one-dimensional convex optimization problem for every row (lines $27$--$28$) and column (lines $31$--$32$). They can be found efficiently using binary search because the Kullback-Leibler divergence $d(x, q)$ is convex in $q$ as $q$ moves away from $x$ in either direction. The $\klucb$ confidence intervals need to be computed only once per stage. Hence, $\rklucb$ has to solve at most $K + L$ convex optimization problems per stage, and hence $(K + L )\log n$ problems overall.

%!TEX root = Paper.tex

\section{Analysis}
\label{sec:analysis}

In this section, we derive a gap-dependent upper bound on the $n$-step regret of $\rklucb$. 
The hardness of our learning problem is measured by two kinds of metrics. The first kind are gaps. The \emph{gaps} of row $i \in [K]$ and column $j \in [L]$ are defined as
\begin{align}
  \Delta^\textsc{u}_i = \bar{u}(i^\ast) - \bar{u}(i)\,, \quad
  \Delta^\textsc{v}_j = \bar{v}(j^\ast) - \bar{v}(j)\,,
  \label{eq:gaps}
\end{align}
respectively; and the \emph{minimum row and column gaps} are defined as
\begin{align}
  \Delta^\textsc{u}_{\min} = \!\!\! \min_{i \in [K]: \Delta^\textsc{u}_i > 0} \Delta^\textsc{u}_i\,, \quad
  \Delta^\textsc{v}_{\min} = \!\!\! \min_{j \in [L]: \Delta^\textsc{v}_j > 0} \Delta^\textsc{v}_j\,,
  \label{eq:minimum gaps}
\end{align}
respectively. Roughly speaking, the smaller the gaps, the harder the problem. This inverse dependence on gaps is tight \cite{rank1stochastic}.

The second kind of quantities are the extremal parameters
\begin{align}
  \mu
  & = \min \set{\frac{1}{K} \sum_{i = 1}^K \bar{u}(i), \ \frac{1}{L} \sum_{j = 1}^L \bar{v}(j)}\,,
  \label{eq:average reward} \\
  \pmx
  & = \max \set{\max_{i \in [K]} \bar{u}(i), \ \max_{j \in [L]} \bar{v}(j)}\,.
  \label{eq:maximum reward}
\end{align}
The first metric, $\mu$, is the minimum of the average of entries of $\bar{u}$ and $\bar{v}$. This quantity appears in our analysis due to the averaging character of $\rklucb$. The smaller the value of $\mu$, the larger the regret. The second metric, $\pmx$, is the maximum entry in $\bar{u}$ and $\bar{v}$. As we shall see the regret scales inversely with
\begin{align}
    \gamma = \max\set{\mu,1-\pmx}\,.
\label{eq:gamma}
\end{align}
Note that if $\mu\to 0$ and $\pmx\to 1$ at the same time then the row and columns gaps must also approach one.
%The larger the value of $\pmx$, the larger the regret. The factor of $\pmx$ is due to the lower bound on the scaled KL divergence in \cref{lem:KL scaling}.

With this we are ready to state our main result:
%Our main result is as follows: %upper bound on the regret of $\rklucb$ is stated and proved below.

\begin{theorem}
\label{thm:upper bound} Let $C = 6 e + 82$, $n\ge 5$. The expected $n$-step regret of $\rklucb$ is bounded as
\begin{align*}
  R(n)
  \leq {} & \frac{160}{\mu \gamma} \left(\sum_{i = 1}^K \frac{1}{\bar{\Delta}^\textsc{u}_i} +
  \sum_{j = 1}^L \frac{1}{\bar{\Delta}^\textsc{v}_j}\right) \log n + % {} \\
  %& 
  C(K + L) \,,
\end{align*}
where
\begin{align*}
  \bar{\Delta}^\textsc{u}_i & = \Delta^\textsc{u}_i + \I{\Delta^\textsc{u}_i = 0} \Delta^\textsc{v}_{\min}\,, \\
  \bar{\Delta}^\textsc{v}_j & = \Delta^\textsc{v}_j + \I{\Delta^\textsc{v}_j = 0} \Delta^\textsc{u}_{\min}\,.
\end{align*}
\end{theorem}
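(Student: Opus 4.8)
The plan is to follow the template of an elimination analysis, but to replace the Hoeffding-width step of $\bilinucb$'s proof by a Kullback--Leibler separation argument; the whole difficulty is concentrated in how a single divergence bound converts a confidence-interval separation into a sample count. First I would record the structural facts that make rows and columns decouple. Since $\rnd{u}_t$ and $\rnd{v}_t$ are independent, $\E{\rnd{u}(i)\rnd{v}(j)} = \bar{u}(i)\bar{v}(j)$, so $i^\ast = \argmax_i \bar u(i)$, $j^\ast=\argmax_j \bar v(j)$, and the instantaneous regret telescopes into $\bar u(i^\ast)\bar v(j^\ast)-\bar u(i)\bar v(j)\le \Delta^{\textsc{u}}_{i}+\Delta^{\textsc{v}}_{j}$ after adding and subtracting $\bar u(i)\bar v(j^\ast)$ and using $\bar u,\bar v\le 1$. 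The key averaging identity is that, conditioned on the elimination history, every surviving row $i$ contributes to $\hat{\rnd{u}}_\ell(i)$ an average of independent Bernoulli samples whose common conditional mean is $\bar u(i)\,\tilde v_\ell$, where $\tilde v_\ell$ is a convex combination (over stages $\ell'\le\ell$) of the per-stage average surviving-column rewards. Because eliminated columns are remapped by $\rnd{h}^{\textsc{v}}$ to better survivors, these per-stage averages are nondecreasing, so $\tilde v_\ell\ge \frac1L\sum_j\bar v(j)\ge\mu$, and crucially $\tilde v_\ell$ is the \emph{same} scale for all surviving rows, so comparing $\hat{\rnd{u}}_\ell(i)$ across rows preserves the ordering of $\bar u$. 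The symmetric statements hold for columns with scale $\tilde u_\ell\ge\mu$.

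Next I would set up the good event $\mathcal G$ that for every stage and every surviving row (column) the true scaled mean $\bar u(i)\tilde v_\ell$ lies inside $[\rnd{L}^{\textsc{u}}_\ell(i),\rnd{U}^{\textsc{u}}_\ell(i)]$ and the empirical mean stays within the confidence radius of it. The concentration must tolerate two complications: the samples are not identically distributed (stagewise means differ) and the means are chosen adaptively. Both are handled by the observation that, for independent Bernoullis, $p\mapsto\log(1-p+pe^t)$ is concave, so the log-MGF of the sum is dominated by that of the i.i.d.\ Bernoulli with the average mean; hence the one-sided Chernoff--KL bound $\mathbb{P}(\hat p \ge \bar p +\varepsilon)\le e^{-n\, d(\bar p+\varepsilon,\bar p)}$ holds at the \emph{average} mean, and the peeling argument of Garivier and Cappé~\shortcite{garivier11klucb} then yields the deviation guarantee at threshold $\delta_\ell=\log n+3\log\log n$. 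Summing the per-interval failure probability (of order $1/(n\log n)$) over the $\le K+L$ arms and $O(\log n)$ stages and multiplying by the trivial per-step regret and horizon $n$ contributes only $O(K+L)$ to the regret, which I would fold into the $C(K+L)$ term.

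On $\mathcal G$ the optimal arms survive, because $\rnd{U}^{\textsc{u}}_\ell(i^\ast)\ge \bar u(i^\ast)\tilde v_\ell\ge \bar u(\rnd{i}_\ell)\tilde v_\ell\ge \rnd{L}^{\textsc{u}}_\ell(\rnd{i}_\ell)$, and symmetrically for $j^\ast$. The heart of the argument is bounding the stage at which a suboptimal row $i$ is eliminated. On $\mathcal G$ I can sandwich $\rnd{U}^{\textsc{u}}_\ell(i)$ above by the KL upper-confidence endpoint around $\bar u(i)\tilde v_\ell$ and $\rnd{L}^{\textsc{u}}_\ell(\rnd{i}_\ell)\ge\rnd{L}^{\textsc{u}}_\ell(i^\ast)$ below by the KL lower-confidence endpoint around $\bar u(i^\ast)\tilde v_\ell$, both at radius of order $\delta_\ell/n_\ell$. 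Row $i$ is eliminated as soon as these two KL balls separate, which happens once $n_\ell \gtrsim \delta_\ell / d\big(\bar u(i)\tilde v_\ell,\ \bar u(i^\ast)\tilde v_\ell\big)$. This is exactly where the announced \emph{scaling property of the KL divergence} enters: I would prove and invoke a lemma lower-bounding the divergence between two commonly scaled Bernoulli means, of the form $d(\lambda a,\lambda a^\ast)\gtrsim \lambda\,\gamma\,(a^\ast-a)^2$ valid for all $\lambda\in[\mu,\pmx]$ and $a<a^\ast\le\pmx$. With $\lambda=\tilde v_\ell\ge\mu$ and $a^\ast-a=\Delta^{\textsc{u}}_i$ this gives $d\gtrsim \mu\gamma(\Delta^{\textsc{u}}_i)^2$, so $i$ is eliminated by the first stage with $\tilde\Delta_\ell\lesssim\sqrt{\mu\gamma}\,\Delta^{\textsc{u}}_i$, after at most $O\!\big(\log n/(\mu\gamma(\Delta^{\textsc{u}}_i)^2)\big)$ plays.

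Finally I would total the regret by charging every row-exploration play of a surviving row $i$ the gap $\Delta^{\textsc{u}}_i$ and every column-exploration play of a surviving column $j$ the gap $\Delta^{\textsc{v}}_j$; the companion arm drawn in each play is itself a survivor, so its contribution to the instantaneous regret is governed by the other dimension's elimination schedule and is accounted there rather than double-counted. For an arm whose own gap is zero but whose opposite dimension still has a positive minimum gap, the number of plays is dictated by how long that opposite dimension keeps exploring, which is what motivates replacing the vanishing gap by $\Delta^{\textsc{v}}_{\min}$ (resp.\ $\Delta^{\textsc{u}}_{\min}$) in the effective gaps $\bar\Delta^{\textsc{u}}_i,\bar\Delta^{\textsc{v}}_j$. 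Multiplying the $O\!\big(\log n/(\mu\gamma(\bar\Delta)^2)\big)$ plays by the $O(\bar\Delta)$ per-play regret and summing over all rows and columns produces the leading $\frac{1}{\mu\gamma}\big(\sum_i 1/\bar\Delta^{\textsc{u}}_i+\sum_j 1/\bar\Delta^{\textsc{v}}_j\big)\log n$ term, with the burn-in stages and the $\mathcal G^c$ mass absorbed into $C(K+L)$. I expect the \emph{main obstacle} to be the scaling lemma of the third step: the scaled gap need not be small, so a local Taylor expansion of $d$ is illegitimate, and the bound has to survive near both endpoints of $[0,1]$, where $d$ behaves completely differently --- getting the clean $\mu\gamma$ dependence (rather than a crude $\mu^2$) out of this single inequality is the crux, with the adaptive, non-identically-distributed concentration of the second step a close runner-up.
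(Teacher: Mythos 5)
Your proposal is correct and follows essentially the same route as the paper: an elimination analysis with good events, scaled row/column means bounded below by $\mu$, KL confidence intervals controlled via Garivier--Capp\'e, and a KL scaling inequality of exactly the form the paper proves ($d(cp,cq) \geq 2c\max\set{c,1-\max\set{p,q}}(p-q)^2$, yielding the $\mu\gamma$ factor), which you correctly identify as the crux. Your log-MGF concavity argument for the adaptively scaled, non-identically-distributed Bernoulli samples is in fact a more careful justification of the concentration step than the paper's bare assertion that Theorem 10 of Garivier and Capp\'e extends to this setting.
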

The difference from the main result of Katariya \etal~\shortcite{rank1stochastic} is that the first term in our bound scales with $1/(\mu\gamma)$ instead of scaling with $1/\mu^2$. Since $\mu \le \gamma$ and in fact often $\mu \ll \gamma$, this is a significant improvement. For an empirical validation of this, see the next section.
%which, as was noted earlier, will also be confirmed empirically.

Due to the lack of space we only provide a sketch of the proof of \cref{thm:upper bound}, which, at a high level, follows the steps  of the proof of the main result of Katariya \etal~\shortcite{rank1stochastic}. Focusing on the source of the improvement, we first state and prove a new lemma, which, as we shall see, will allow us to replace one of the $1/\mu$ factors with $1/\gamma$ in the regret bound.
Recall from \cref{sec:introduction} that $d$ denotes the KL divergence 
%\begin{align*}
%  d(p, q) = p \log \frac{p}{q} + (1 - p) \log \frac{1 - p}{1 - q}
%\end{align*}
%be the KL divergence 
between Bernoulli random variables with means $p ,q\in [0, 1]$.
\begin{lemma}
\label{lem:KL scaling} Let $c,p,q \in [0, 1]$. Then
\begin{align}
  c (1 - \max \set{p, q}) d(p, q) \leq
  d(c p, c q) \leq
  c d(p, q)\,.
  \label{eq:kl scaling}
\end{align}
and in particular 
\begin{align}
  2 c\max(c,1-\max\set{p,q}) (p-q)^2 \leq d(cp,cq)\,.
  \label{eq:klscaling2}
\end{align}
\end{lemma}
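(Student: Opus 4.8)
The plan is to reduce \emph{every} part of the lemma to a single structural fact: the map $c\mapsto F(c):=d(cp,cq)$ is convex on $[0,1]$, with $F(0)=0$. To see this, write $F(c)=cp\log\frac{p}{q}+(1-cp)\log\frac{1-cp}{1-cq}$; the first term is linear in $c$, so only the second term $G(c):=(1-cp)\log\frac{1-cp}{1-cq}$ contributes to the curvature. A direct differentiation (the algebra simplifies using $q(1-cp)-p(1-cq)=q-p$) gives, after clearing denominators, the clean identity
\[ F''(c)=G''(c)=\frac{(p-q)^2}{(1-cp)(1-cq)^2}\ge 0, \]
so $F$ is convex, with degenerate cases (a vanishing denominator) handled by the usual continuous extension of $d$. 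I would isolate this computation as the technical heart of the argument.

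The upper bound is then immediate. Since $F$ is convex with $F(0)=0$, the chord from $0$ to $1$ dominates the graph, so $F(c)=F\big((1-c)\cdot 0+c\cdot 1\big)\le cF(1)=c\,d(p,q)$. (Equivalently, this is a data-processing inequality: one writes a $\mathrm{Bernoulli}(cp)$ variable as a product $AB$ of independent $A\sim\mathrm{Bernoulli}(c)$ and $B\sim\mathrm{Bernoulli}(p)$, and compares the pairs $(AB,A)$, for which the conditional divergence is $c\,d(p,q)$.) For the lower bound I would again use convexity: for convex $F$ with $F(0)=0$ the secant slopes $F(c)/c$ are nondecreasing in $c$, hence $F(c)/c\ge \lim_{c\to0^+}F(c)/c=F'(0^+)=p\log\frac{p}{q}-(p-q)=:I(p,q)$, where $I(a,b):=a\log\frac{a}{b}-a+b\ge 0$ is the generalized KL divergence. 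Thus it suffices to prove the \emph{scalar} inequality $(1-m)\,d(p,q)\le I(p,q)$ with $m=\max\{p,q\}$; using the decomposition $d(p,q)=I(p,q)+I(1-p,1-q)$, this is equivalent to $(1-m)\,I(1-p,1-q)\le m\,I(p,q)$.

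This last inequality is where I expect the real work, and I would prove it by a one-variable monotonicity argument, fixing the larger of $p,q$ and moving the smaller toward it. When $p\ge q$ (so $m=p$), set $\psi(q)=p\,I(p,q)-(1-p)\,I(1-p,1-q)$; then $\psi(p)=0$ and a short computation yields $\psi'(q)=-(p-q)^2/\big(q(1-q)\big)\le 0$, so $\psi(q)\ge\psi(p)=0$. When $p<q$ (so $m=q$), set $\chi(p)=q\,I(p,q)-(1-q)\,I(1-p,1-q)$; then $\chi(q)=\chi'(q)=0$ and $\chi''(p)=\frac{q}{p}-\frac{1-q}{1-p}\ge 0$ for $p\le q$, so $\chi'\le 0$ on $[0,q]$, whence $\chi$ is nonincreasing and $\chi(p)\ge\chi(q)=0$. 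This establishes the first display \eqref{eq:kl scaling}.

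Finally, \eqref{eq:klscaling2} follows by combining the lower bound just proved with Pinsker's inequality in two ways: on one hand $d(cp,cq)\ge c(1-m)\,d(p,q)\ge 2c(1-m)(p-q)^2$, and on the other, applying Pinsker directly to $d(cp,cq)$ gives $d(cp,cq)\ge 2(cp-cq)^2=2c^2(p-q)^2$; taking the larger of the two bounds yields $d(cp,cq)\ge 2c\,\max\{c,1-m\}(p-q)^2$. In summary, the only genuinely delicate step is the scalar inequality of the previous paragraph; the convexity identity, the two chord/secant consequences, and the two Pinsker applications are then routine bookkeeping.
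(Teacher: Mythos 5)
Your proof is correct, but it takes a genuinely different route from the paper's. The paper fixes $c$ and $p$ and compares the two functions $x \mapsto d(cp,cx)$ and $x \mapsto c\,d(p,x)$ via the ratio of their $x$-derivatives, which equals $\frac{1-cx}{1-x}$ and is sandwiched between $1$ and $\frac{1}{1-\max\{p,q\}}$ on $[\min\{p,q\},\max\{p,q\}]$; integrating from $p$ to $q$ delivers both sides of \eqref{eq:kl scaling} from that single computation. You instead fix $p,q$ and study $F(c)=d(cp,cq)$ as a function of $c$: the identity $F''(c)=(p-q)^2/\bigl((1-cp)(1-cq)^2\bigr)$ checks out, convexity plus $F(0)=0$ gives the upper bound as the chord inequality $F(c)\le c F(1)$ (or, as you note, as a data-processing/chain-rule statement for $(A,AB)\mapsto AB$), and the secant-slope monotonicity gives $F(c)\ge c F'(0^+)=c\,I(p,q)$ with $I(p,q)=p\log(p/q)-p+q$, reducing the lower bound to the $c$-free inequality $(1-\max\{p,q\})\,d(p,q)\le I(p,q)$. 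Your two monotonicity arguments for that scalar inequality are also correct (I verified $\psi'(q)=-(p-q)^2/(q(1-q))$ and $\chi''(p)=q/p-(1-q)/(1-p)$). What your route buys: the upper bound becomes conceptually immediate, and the intermediate bound $d(cp,cq)\ge c\,I(p,q)$ is strictly sharper than the stated lower bound, since $I(p,q)\ge(1-\max\{p,q\})\,d(p,q)$ can be strict; what it costs is the extra case analysis, whereas the paper's single derivative-ratio computation handles both directions at once. Your derivation of \eqref{eq:klscaling2} from the two applications of Pinsker's inequality is identical to the paper's.
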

\begin{proof}
The proof of \eqref{eq:kl scaling} is based on differentiation. The first two derivatives of $d(c p, c q)$ with respect to $q$ are
\begin{align*}
  \frac{\partial}{\partial q} d(c p, c q) & = \frac{c (q - p)}{q (1 - c q)}\,, \\
  \frac{\partial^2}{\partial q^2} d(c p, c q) & = \frac{c^2 (q - p)^2 + c p (1 - c p)}{q^2 (1 - c q)^2}\,;
\end{align*}
and the first two derivatives of $c d(p, q)$ with respect to $q$ are
\begin{align*}
  \frac{\partial}{\partial q} [c d(p, q)] & = \frac{c (q - p)}{q (1 - q)}\,, \\
  \frac{\partial^2}{\partial q^2} [c d(p, q)] & = \frac{c (q - p)^2 + c p (1 - p)}{q^2 (1 - q)^2}\,.
\end{align*}
The second derivatives show that both $d(c p, c q)$ and $c d(p, q)$ are convex in $q$ for any $p$. The minima are at $q = p$.

We fix $p$ and $c$, and prove \eqref{eq:kl scaling} for any $q$. The upper bound is derived as follows. Since
\begin{align*}
  d(c p, c x) =
  c d(p, x) =
  0
\end{align*}
when $x = p$, the upper bound holds if $c d(p, x)$ increases faster than $d(c p, c x)$ for any $p < x \leq q$, and if $c d(p, x)$ decreases faster than $d(c p, c x)$ for any $q \leq x < p$. This follows from the definitions of $\frac{\partial}{\partial x} d(c p, c x)$ and $\frac{\partial}{\partial x} [c d(p, x)]$. In particular, both derivatives have the same sign for any $x$, and $1 / (1 - c x) \leq 1 / (1 - x)$ for $x \in [\min \set{p, q}, \max \set{p, q}]$.

The lower bound is derived as follows. Note that the ratio of $\frac{\partial}{\partial x} [c d(p, x)]$ and $\frac{\partial}{\partial x} d(c p, c x)$ is bounded from above as
\begin{align*}
  \frac{\frac{\partial}{\partial x} [c d(p, x)]}{\frac{\partial}{\partial x} d(c p, c x)} =
  \frac{1 - c x}{1 - x} \leq
  \frac{1}{1 - x} \leq
  \frac{1}{1 - \max \set{p, q}}
\end{align*}
for any $x \in [\min \set{p, q}, \max \set{p, q}]$. 
%Reordering and integrating both sides (in $x$) from $p$ to $q$ we get the desired result.
Therefore, we get a lower bound on $d(c p, c q)$ when we multiply $c d(p, q)$ by $1 - \max \set{p, q}$.

To prove \eqref{eq:klscaling2} note that by Pinsker's inequality, for any $p,q$, $d(p,q) \ge 2 (p-q)^2$.
Hence, on the one hand, $d(cp,cq) \ge 2 c^2(p-q)^2$, while on the other hand, from \eqref{eq:kl scaling} we find
that $d(cp,cq) \ge 2 c (1-\max\set{p,q}) (p-q)^2$. Taking the maximum of the right-hand sides gives \eqref{eq:klscaling2}.
\end{proof}

\begin{proof}[Proof sketch of \cref{thm:upper bound}]
%The formal proof is technical and based on similar arguments to those in the proof of $\klucb$ \cite{garivier11klucb}.

We proceed along the lines of Katariya \etal~\shortcite{rank1stochastic}. The key step in their analysis is the upper bound on the expected $n$-step regret of any suboptimal row $i \in [K]$. This bound is proved as follows. First, Katariya \etal~\shortcite{rank1stochastic} show that row $i$ is eliminated with  high probability after $O((\mu \Delta^\textsc{u}_i)^{-2} \log n)$ observations, for any column elimination strategy. Then they argue that the amortized per-observation regret before the elimination is $O(\Delta^\textsc{u}_i)$. Therefore, the total regret of row $i$ is $O(\mu^{-2} (\Delta^\textsc{u}_i)^{-1} \log n)$. The expected $n$-step regret of any suboptimal column $j \in [L]$ is bounded analogously.

We modify the above argument as follows. Roughly speaking, due to the $\klucb$ confidence interval, a suboptimal row $i$ is eliminated with a high probability after
\begin{align*}
  O\left(\frac{1}{d(\mu (\bar{u}(i^\ast) - \Delta^\textsc{u}_i), \mu \bar{u}(i^\ast))} \log n\right)
\end{align*}
observations. Therefore, the expected $n$-step regret of coming from experimenting with row $i$ is
\begin{align*}
  O\left(\frac{\Delta^\textsc{u}_i}{d(\mu (\bar{u}(i^\ast) - \Delta^\textsc{u}_i), \mu \bar{u}(i^\ast))} \log n\right)\,.
\end{align*}
Now we apply \eqref{eq:klscaling2} of \cref{lem:KL scaling} to get that the regret is
\begin{align*}
   O\left(\frac{\Delta^\textsc{u}_i}{d(\mu (\bar{u}(i^\ast) - \Delta^\textsc{u}_i), \mu \bar{u}(i^\ast))} \log n\right) 
%  & \quad = O\left(\frac{\Delta^\textsc{u}_i}{\mu \gamma
%  d(\bar{u}(i^\ast) - \Delta^\textsc{u}_i, \bar{u}(i^\ast))} \log n\right) \\
  = O\left(\frac{1}{\mu \gamma \Delta^\textsc{u}_i} \log n\right)\,.
\end{align*}
The regret of any suboptimal column $j \in [L]$ is bounded analogously.
\end{proof}

%\input{Discussion}

%!TEX root = Paper.tex

\section{Experiments}
\label{sec:experiments}

We conduct two experiments. In \cref{sec:simulated data}, we compare our algorithm to other algorithms available in the literature on a synthetic problem. In \cref{sec:realworld}, we evaluate the same set of algorithms on models built based on a real-world dataset.

\subsection{Rank1Elim, UCB1Elim, and UCB1}
\label{sec:simulated data}

Following Katariya \etal~\shortcite{rank1stochastic}, we consider the `needle in a haystack' class of problems, where only one item is attractive and one position is examined.  We recall the problem here. The $i$-th entry of $\rnd{u}_t$, $\rnd{u}_t(i)$, and the $j$-th entry of $\rnd{v}_t$, $\rnd{v}_t(j)$, are independent Bernoulli variables with mean
\begin{align}
    \begin{split}
        \bar{u}(i) &= p_\textsc{u} + \Delta_\textsc{u} \I{i = 1},\\
        \bar{v}(j) &= p_\textsc{v} + \Delta_\textsc{v} \I{j = 1},
    \end{split}
    \label{eq:discussion problem}
\end{align}
for some $(p_\textsc{u}, p_\textsc{v}) \in [0, 1]^2$ and gaps $(\Delta_\textsc{u},\Delta_\textsc{v}) \in (0, 1 - p_\textsc{u}]\times (0, 1 - p_\textsc{v}]$. Note that arm $(1,1)$ is optimal with an expected reward of $(p_\textsc{u} + \Delta_\textsc{u})(p_\textsc{v} + \Delta_\textsc{v})$.

\begin{figure*}[t]
  \centering
  \includegraphics[width=1.8in]{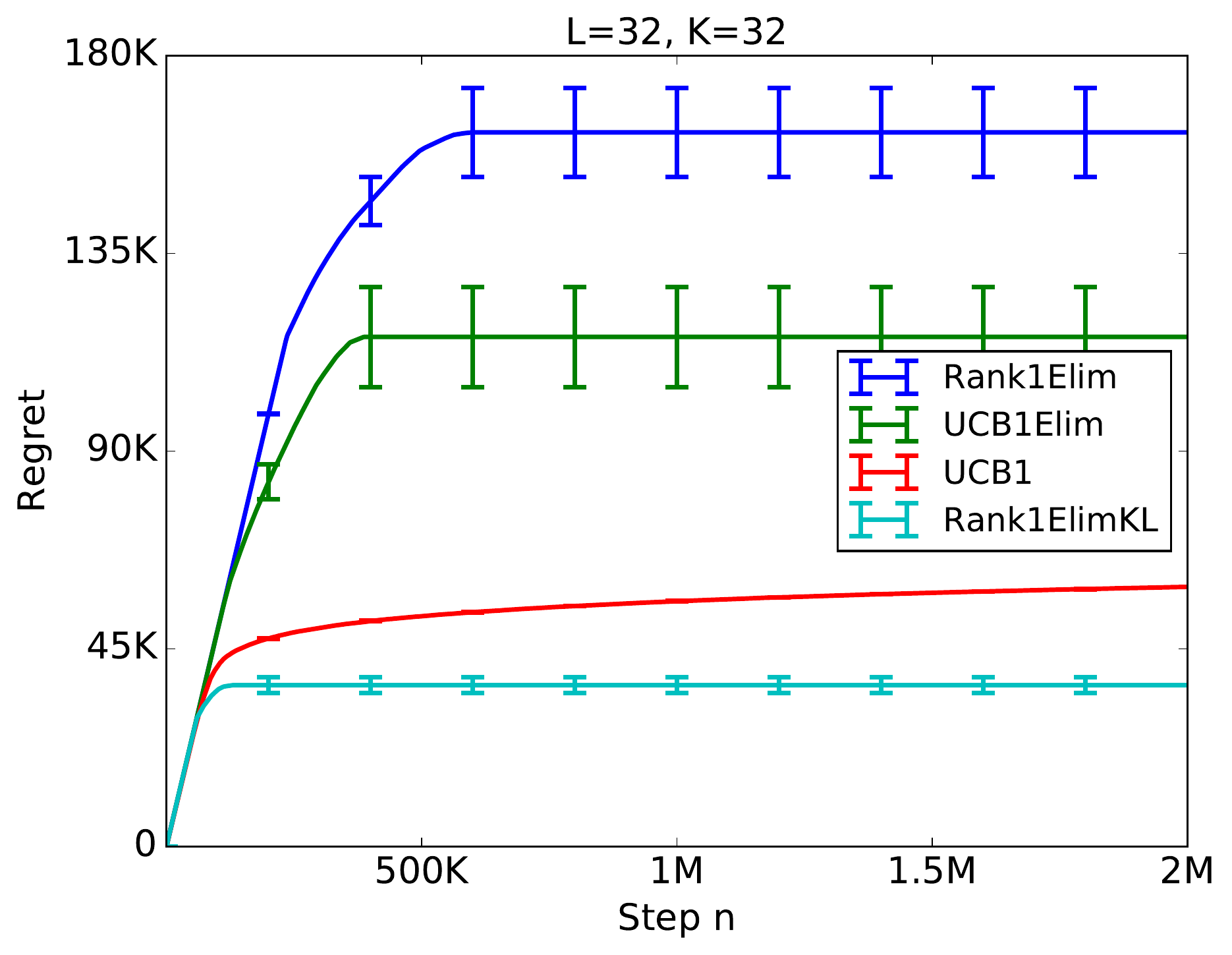}
  \includegraphics[width=1.8in]{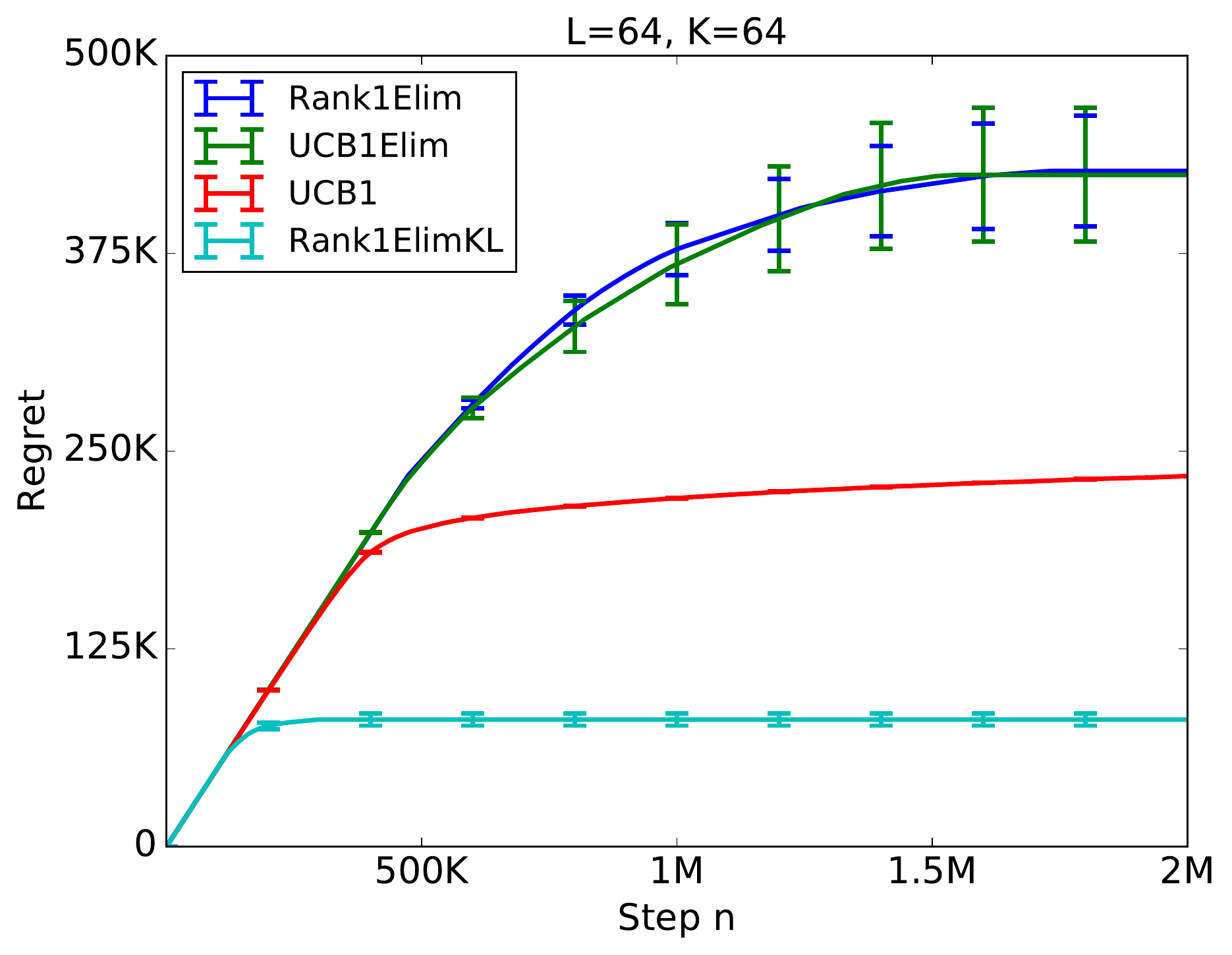}
  \includegraphics[width=1.8in]{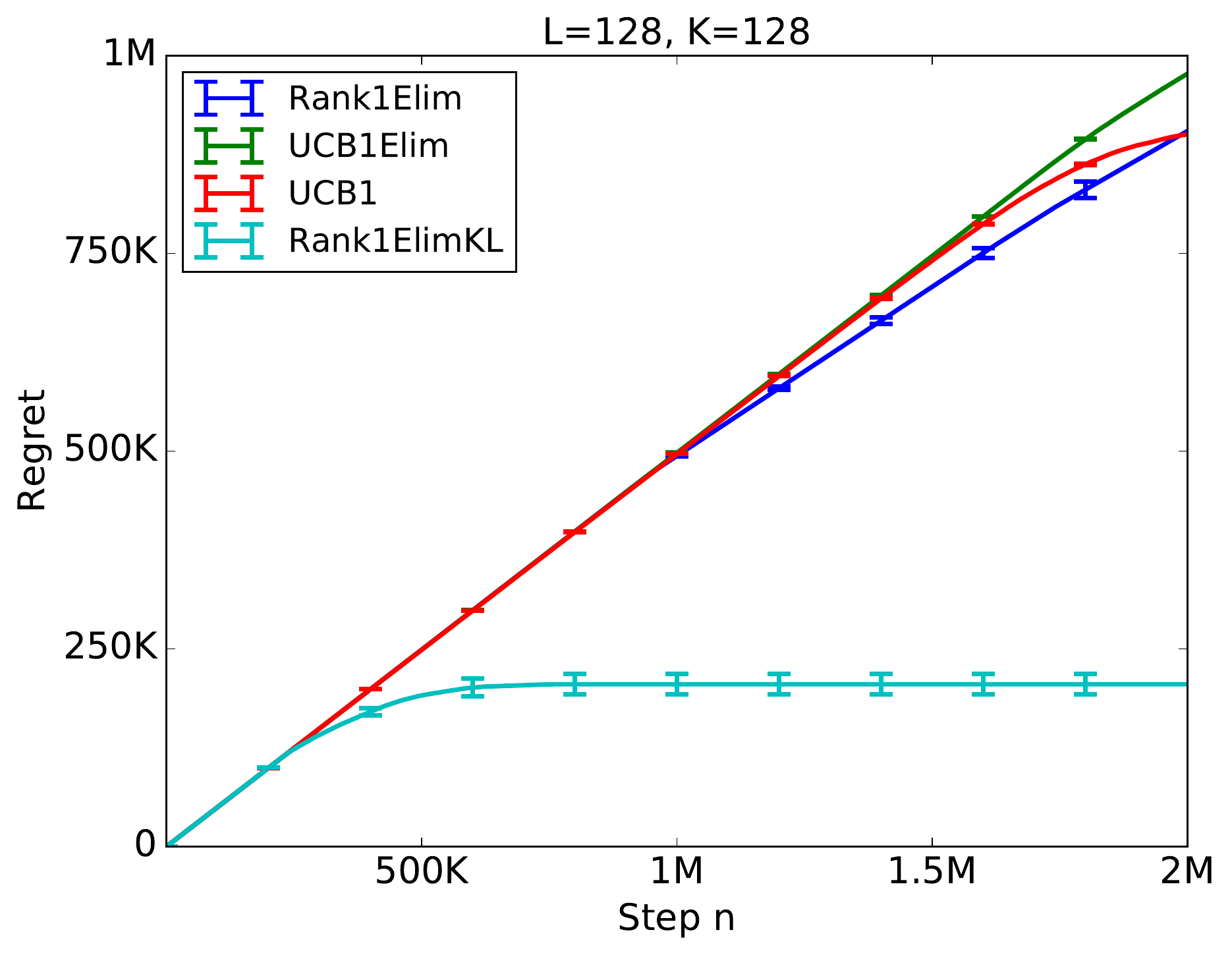} %\vspace{-0.05in}
  \hspace{3.1in} (a) \hspace{1.6in} (b) \hspace{1.6in} (c) \vspace{-0.05in}
  \caption{The $n$-step regret of $\rklucb$, $\ucbelim$, $\bilinucb$ and $\ucb$ on the problem \eqref{eq:discussion problem} for \textbf{a.} $K = L = 32$ \textbf{b.} $K = L = 64$ \textbf{c.} $K = L = 128$. The results are averaged over $20$ runs.}
  \label{fig:comparison}
\end{figure*}

\begin{figure*}[h]
  \centering
  \includegraphics[width=1.72in]{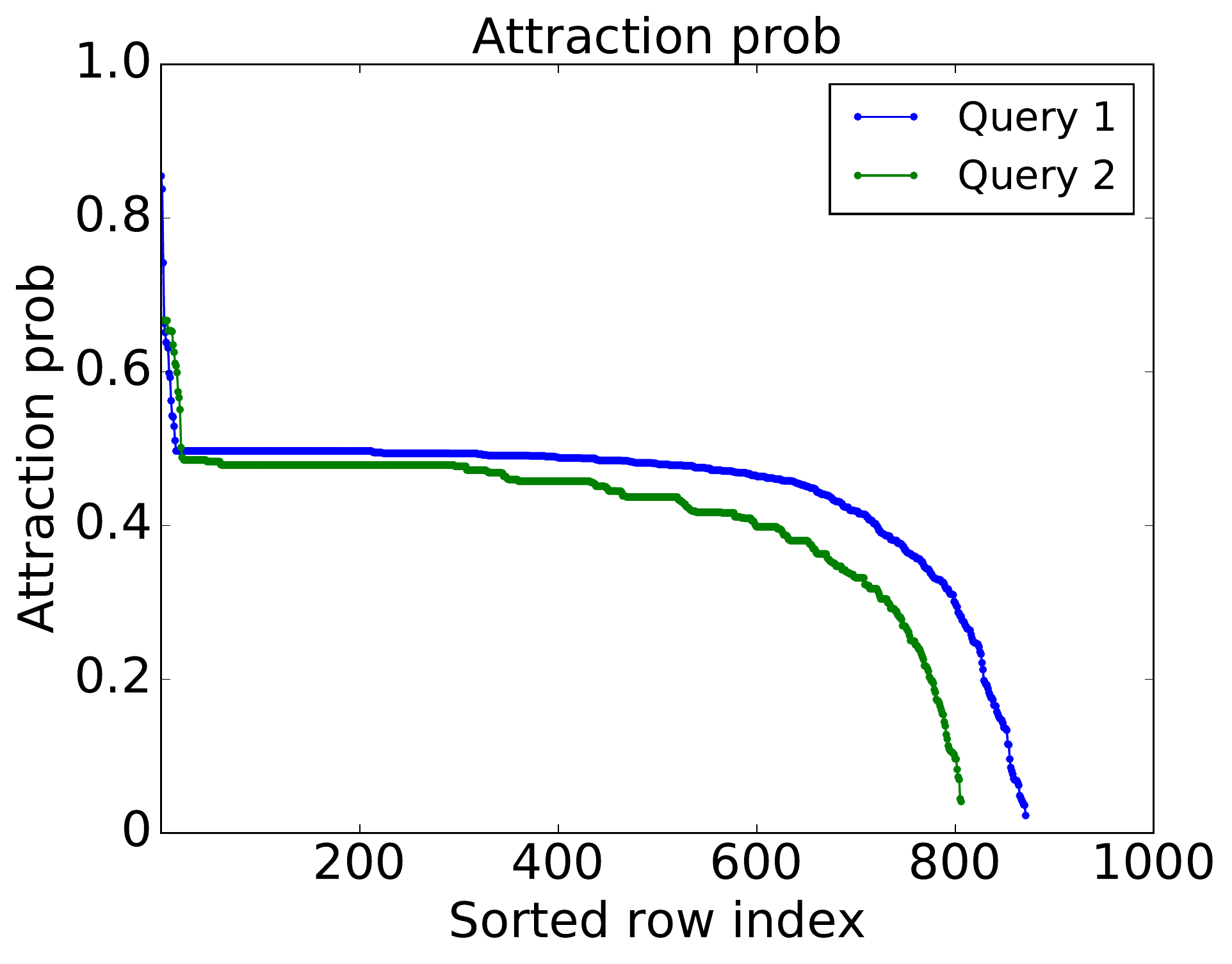}
  \includegraphics[width=1.67in]{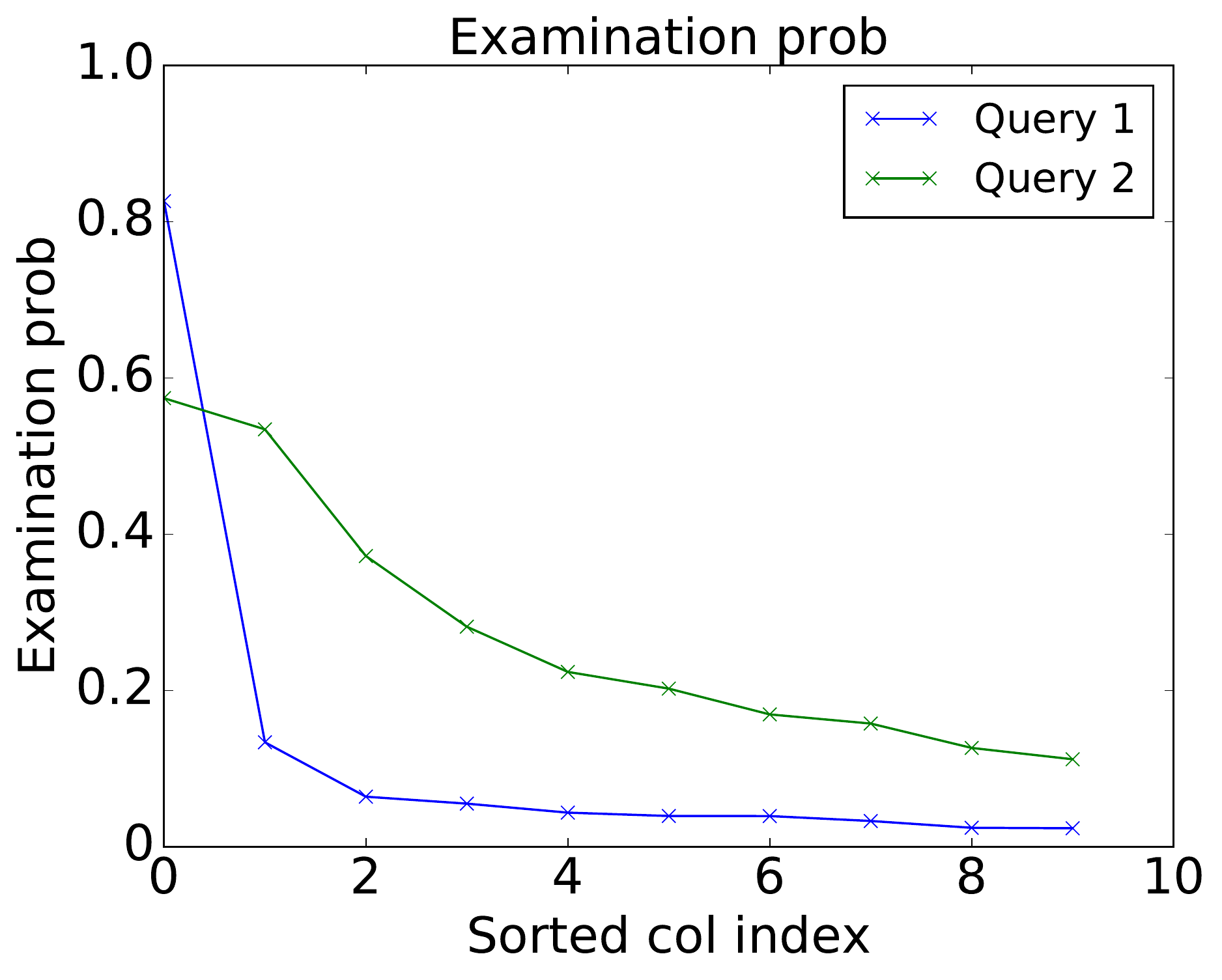}
  \includegraphics[width=1.71in]{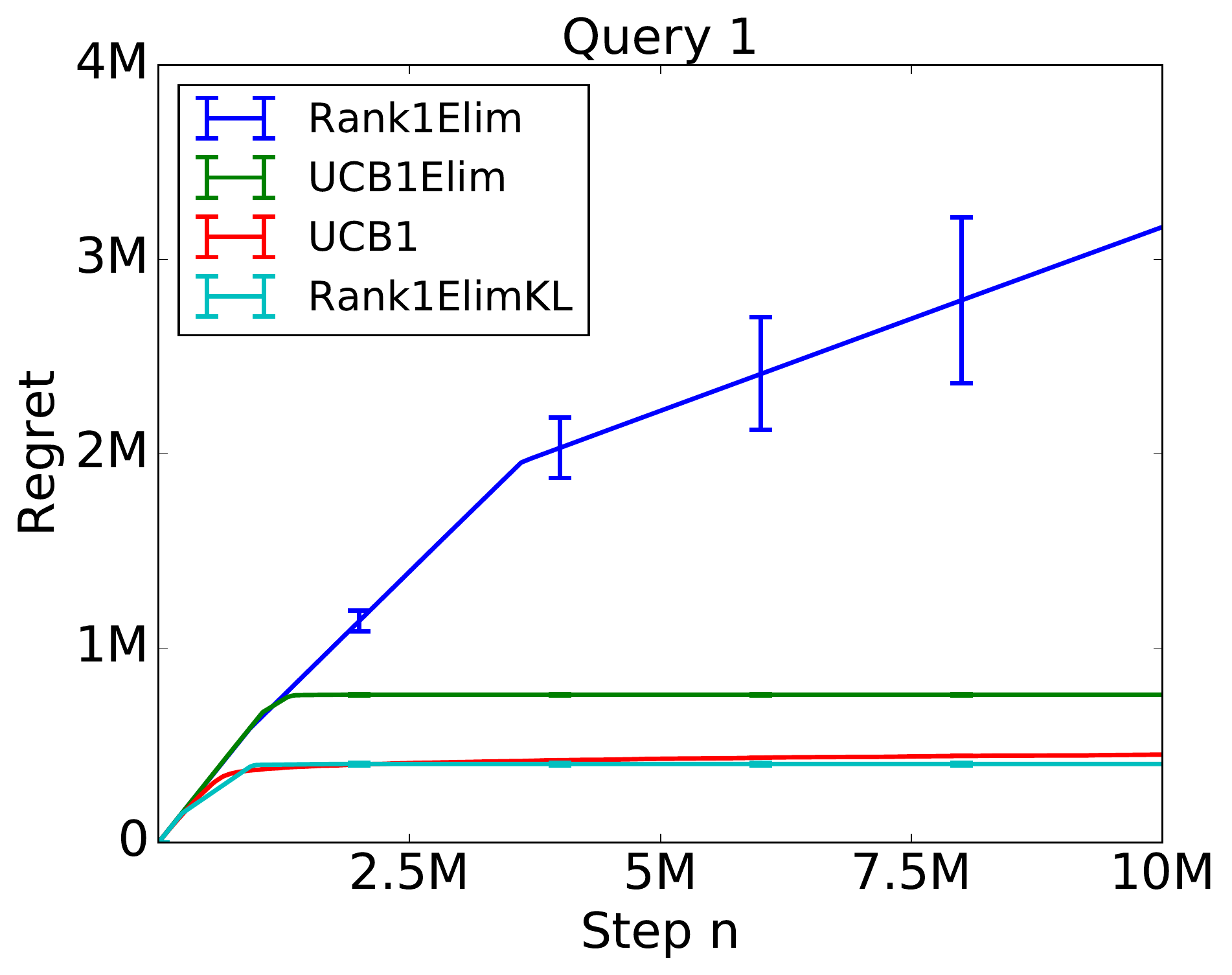} 
  \includegraphics[width=1.79in]{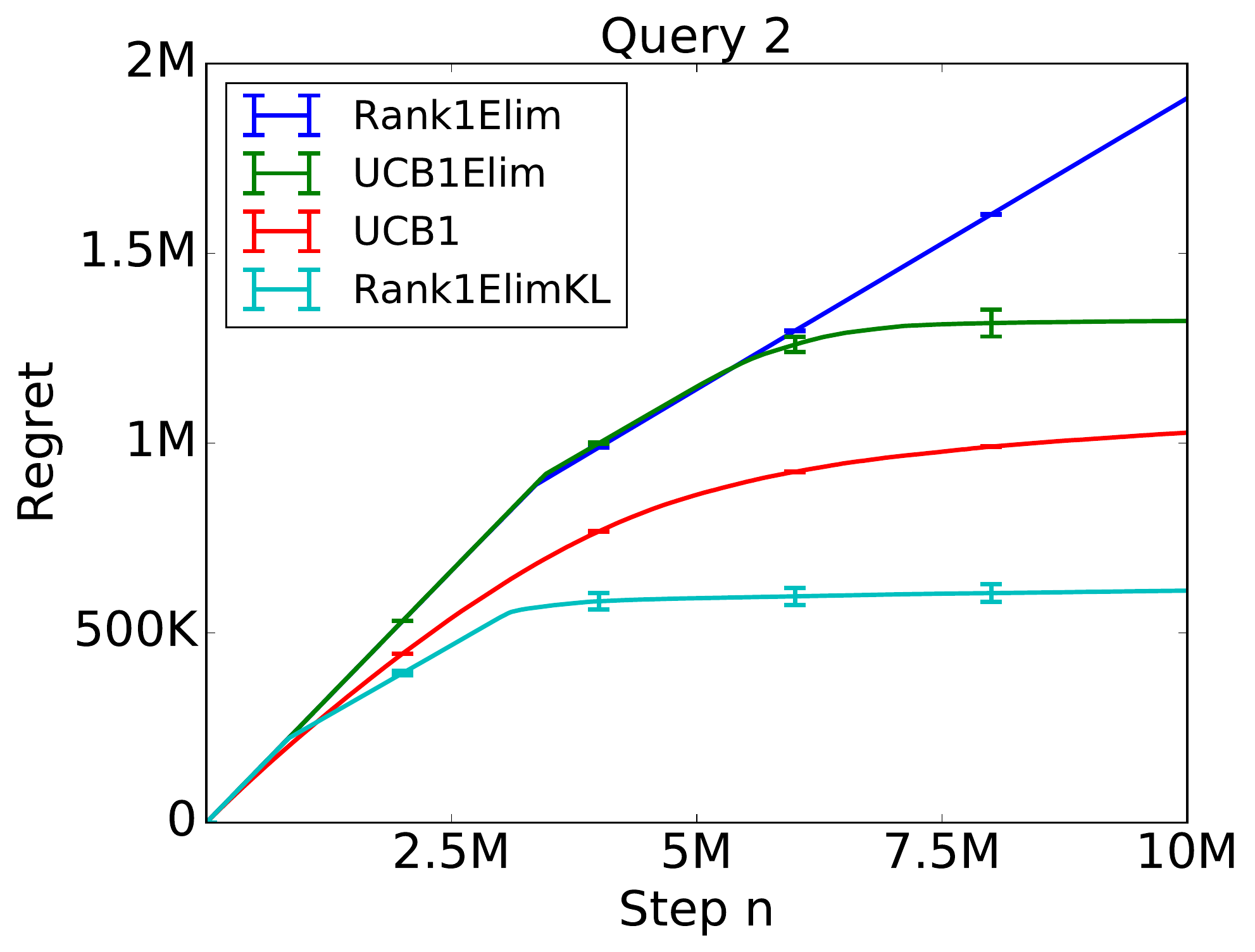} \vspace{-0.05in}
  \hspace{0.2in} (a) \hspace{1.5in} (b) \hspace{1.5in} (c) \hspace{1.5in} (d) \vspace{-0.05in}
  \caption{\textbf{a.} The sorted attraction probabilities of the items from $2$ queries from the Yandex dataset. \textbf{b. } The sorted examination probabilities of the positions for the same $2$ queries. \textbf{c. }The $n$-step regret for Query $1$. \textbf{d. }Regret for Query $2$. The results are averaged over $5$ runs.}
  \label{fig:samplequeries}
\end{figure*}

The goal of this experiment is to compare $\rklucb$ with three other algorithms from the literature and validate that its regret scales linearly with $K$ and $L$, which implies that it exploits the problem structure. 
In this experiment, we set $p_\textsc{u} = p_\textsc{v} = 0.25$ and $\Delta_\textsc{u} = \Delta_\textsc{v} = 0.5$ so that 
$\mu = (1-1/K) 0.25 + 0.75/K = 0.25 +0.5/K$, $1-\pmx = 0.25$ and $\gamma =\mu =  0.25+0.5/K$. 
\todoc{Note that $\mu = \gamma$!}
% Hence, the difference if any stem

In addition to comparing to $\bilinucb$, we also compare to
%$\bilinucb$ is the algorithm proposed in Katariya \etal~\shortcite{rank1stochastic} for the \emph{stochastic rank-$1$ bandit}, which is an elimination based algorithm similar to $\rklucb$ that uses $\ucb$ confidence intervals. 
$\ucbelim$ \cite{auer10ucb} and $\ucb$ \cite{auer02finitetime}.
$\ucb$ is chosen as a baseline as it has been used by Katariya \etal~\shortcite{rank1stochastic} in their experiments, too, while 
$\ucbelim$ is chosen as it is based on a similar elimination approach as $\bilinucb$ and $\rklucb$.
We opted not to compare to $\klucb$ as we expect it to perform similarly to $\ucb$ as the problem parameters are relatively close to $0.5$.
%As it is well known, $\ucb$ \cite{auer02finitetime} pulls the arm with the highest upper confidence bounds, while
%$\ucbelim$ \cite{auer10ucb} is an elimination based algorithm that eliminates suboptimal arms using confidence intervals that are also derived from Hoeffding's inequality.

\cref{fig:comparison} shows the $n$-step regret of $\rklucb$, $\bilinucb$, $\ucbelim$, and $\ucb$ as a function of time ($n$) for values of $K = L$, the latter of which double from one plot to the next. We observe that only the regret of $\rklucb$ flattens in all three problems. % which indicates that $\rklucb$ learns the optimal arm. 
We also see that the regret of $\rklucb$ doubles as $K$ and $L$ double, indicating that our bound in \cref{thm:upper bound} has the right scaling in $K + L$, and that the algorithm leverages the problem structure. On the other hand, the regret of $\ucb$ and $\ucbelim$ quadruples when $K$ and $L$ double, because their regret is $\Omega(KL)$. Finally, in all problems, we observe that $\rklucb$ outperforms all other algorithms, which indicates that it leverages the structure of the problem in an efficient manner. This is most obvious for large $K$ and $L$, e.g., \cref{fig:comparison}c.
% where $\rklucb$ is the only algorithm that finished exploration before the end of the trials. 
This happens because $\rklucb$ works with improved confidence intervals. It is worth noting that $\mu=\gamma$ for this problem, and hence $\mu^2 = \mu \gamma$, and according to \cref{thm:upper bound}, $\rklucb$ should not perform better than $\bilinucb$, yet it is $4$ times better as seen in \cref{fig:comparison}. This suggests that our upper bound is loose.
%Based on this result we also expect $\klucb$ to perform
% better than $\ucb$ on these instances, though its regret would still scale quadratically with $K=L$.
%Thus, although both $\bilinucb$ and $\rklucb$ leverage the problem structure, $\rklucb$ is superior because it exploits the distribution information.

\subsection{Models based on Real-World Data}
\label{sec:realworld}

In this experiment, we compare the performance of $\rklucb$ and other algorithms on models derived from the \emph{Yandex} dataset \cite{yandex}, an anonymized search log of $35$M search sessions. Each session contains a query, the list of displayed documents at positions $1$ to $10$, and the clicks on those documents. We extract the $20$ most frequent queries from the dataset, and estimate the parameters of the PBM model using the EM algorithm \cite{pyclick,chuklin15click}. 

In order to illustrate the typical models we obtain, we plot the learned parameters of two queries, Queries $1$ and $2$. \cref{fig:samplequeries}a shows the sorted attraction probabilities of items in the queries, and \cref{fig:samplequeries}b shows the sorted examination probabilities of the positions. Query $1$ has $L = 871$ items and Query $2$ has $L = 807$ items. We illustrate the performance on these queries because they differ notably in their $\mu$ \eqref{eq:average reward} and $\pmx$ \eqref{eq:maximum reward}, so we can study the performance of our algorithm in different real-world settings. \cref{fig:samplequeries}c and d show the regret of all algorithms on Queries $1$ and $2$, respectively. 

For Query $1$, $\rklucb$ is significantly better than $\bilinucb$ and $\ucbelim$, and no worse than $\ucb$. For Query $2$, $\rklucb$ is superior to all algorithms. Note that $\pmx = 0.85$ in Query $1$ is higher than $\pmx = 0.66$ in Query $2$. Also, $\mu = 0.13$ in Query $1$ is lower than $\mu = 0.28$ in Query $2$. From \cref{eq:gamma}, $\gamma=0.15$ for Query $1$, which is lower than $\gamma=0.34$ for Query $2$. Our upper bound (\cref{thm:upper bound}) on the regret of $\rklucb$ scales as $\mathcal{O}((\mu\gamma)^{-1})$, and consequently we expect $\rklucb$ to perform better on Query $2$.  The results % (shown on subfigures \textbf{c.} and \textbf{d.}) 
confirm this expectation.

\cref{fig:realworld} shows the regret averaged over all $20$ queries. Here we compute the average regret on the $20$ queries, and calculate the standard error over $5$ runs. $\rklucb$ has the lowest regret among all the algorithms; its regret is $10.9$ percent lower than that of $\ucb$, and $79$ percent lower than that of $\bilinucb$. This is expected: Some real-world instances have a benign rank-$1$ structure like Query $2$, while others do not, like Query $1$. Hence we see a reduction in the average gains of $\rklucb$ over $\ucb$ in \cref{fig:realworld} as compared to \cref{fig:samplequeries}d. The high regret of $\bilinucb$, which also is designed to exploit the problem structure, shows that it fails when faced with such unfavorable rank-$1$ problems. The fact that $\rklucb$ performs on-par with optimal algorithms on the hard problems, and is able to better leverage the problem structure on easy ones, makes it an appealing solution for practice.

\begin{figure}[h]
  \centering
  \includegraphics[width=1.8in]{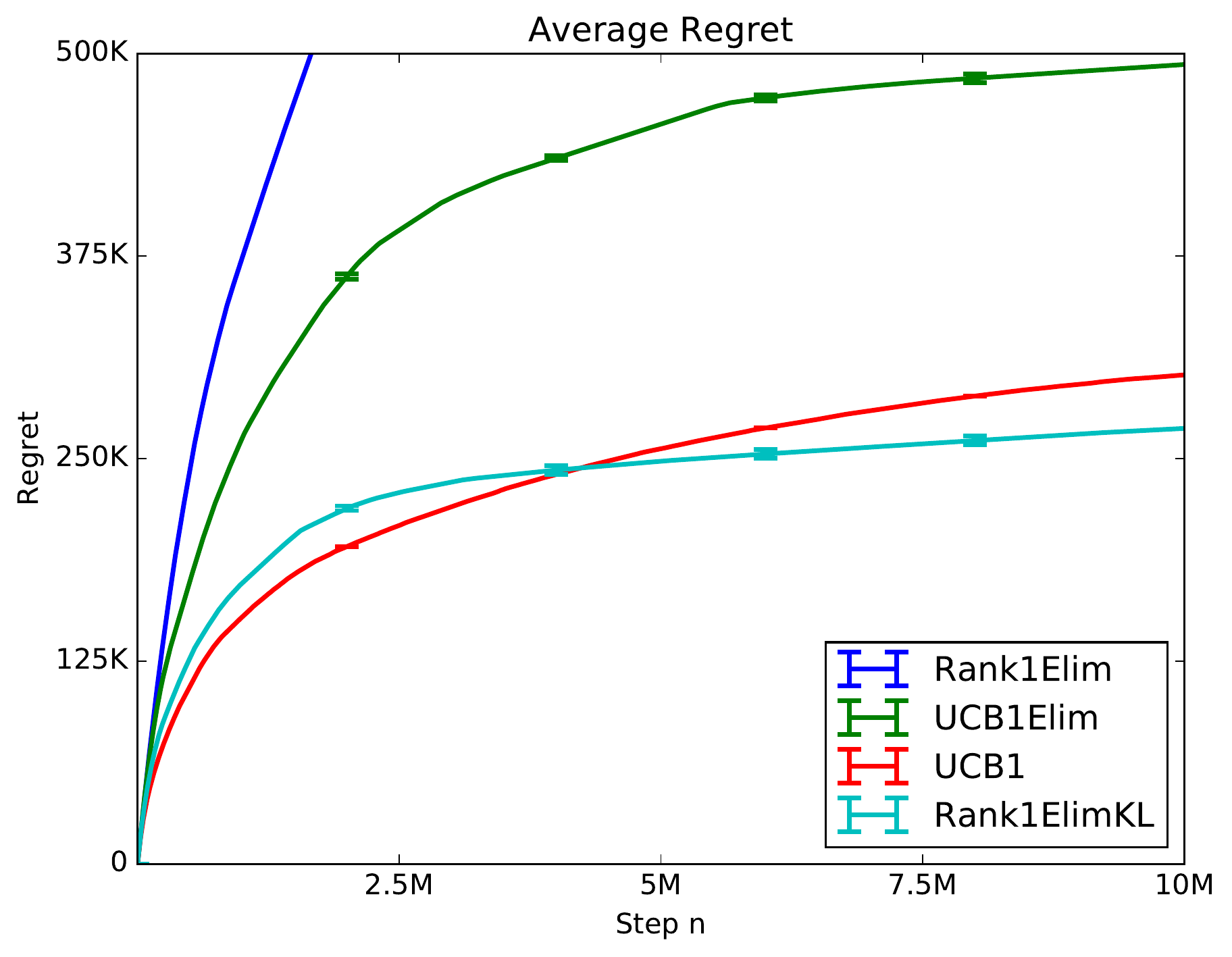} \vspace{-0.2in}
  \caption{The average $n$-step regret over all $20$ queries from the Yandex dataset, with $5$ runs per query.}
  \label{fig:realworld}
\end{figure}

%!TEX root = Paper.tex

\section{Related Work}
\label{sec:related work}

Our algorithm is based on the $\bilinucb$ algorithm of  Katariya \etal~\shortcite{rank1stochastic}; the main difference being
that we replace the confidence intervals of $\bilinucb$ that are based on subgaussian tail inequalities with confidence intervals based on KL divergences. As discussed beforehand, this result in an unilateral improvement of their regret bound:
The new algorithm is still able to exploit the problem structure of benign instances, while, unlike for $\bilinucb$, its regret
is still controlled even on instances that are ``hard'' for $\bilinucb$. 
As demonstrated in the previous section, the new algorithm is also a major practical improvement over $\bilinucb$, while staying competitive with alternatives on hard instances.
%$\rklucb$ solves \emph{Bernoulli rank-$1$ bandit}, which is a specialization of stochastic rank-$1$ bandit \cite{rank1stochastic} to Bernoulli rewards. Katariya \etal~\shortcite{rank1stochastic} propose an elimination algorithm, $\bilinucb$, when the rewards are sub-gaussian, and provide a regret bound of $\mathcal{O}\left(  \mu^{-2} \Delta^{-1} \log n \right)$. Our algorithm, $\rklucb$, is an algorithm for Bernoulli rewards, and its $n$-step regret is $O( (\mu\gamma\Delta)^{-1} \log n)$. This improvement is from $1/\mu$ to $1/\gamma$ is significant for real-world applications as we discuss in \cref{sec:discussion}, because click data often has $\mu=1/\max\{K,L\}$ and $\pmx \ll 1$, so that $\gamma = \max\set{\mu,1-\pmx} \gg \mu$. Our experiments in \cref{sec:experiments} show that the gains of $\rklucb$ over $\bilinucb$ are substantial.

Several other papers studied bandits where the payoff is given by a low rank matrix. Zhao \etal~\shortcite{zhao13interactive} proposed a bandit algorithm for low-rank matrix completion, which approximates the posterior of latent item features by a single point. The authors do not analyze this algorithm. Kawale \etal~\shortcite{kawale15efficient} proposed a bandit algorithm for low-rank matrix completion which uses Thompson sampling with Rao-Blackwellization. They analyze a variant of their algorithm whose $n$-step regret for rank-$1$ matrices is $O((1 / \Delta^2) \log n)$. This is suboptimal compared to our algorithm. Maillard \etal~\shortcite{maillard14latent} studied a multi-armed bandit problem where the arms are partitioned into several latent groups. In this work, we do not make any such assumptions, but our results are limited to rank $1$. Gentile \etal~\shortcite{gentile14online} proposed an algorithm that clusters users based on their preferences, under the assumption that the features of items are known. Sen \etal~\shortcite{sen2016contextual} proposed an algorithm for contextual bandits with latent confounders, which reduces to a multi-armed bandit problem where the reward matrix is low-rank. They use an NMF-based approach and require that the reward matrix obeys a variant of the restricted isometry property. We make no such assumptions. Our work also differs from all above papers in the setting. The learning agents controls both the choice of the row and column. In the above papers, the rows are controlled by the environment.

$\rklucb$ is motivated by the structure of the PBM \cite{richardson07predicting}. Lagree \etal~\shortcite{lagree16multipleplay} proposed a bandit algorithm for this model but they assume that the examination probabilities are known. $\rklucb$ can be used to solve this problem without this assumption. The cascade model \cite{craswell08experimental} is an alternative way of explaining the position bias in click data \cite{chuklin15click}. Bandit algorithms for this class of models have been proposed in several recent papers \cite{kveton15cascading,combes15learning,kveton15combinatorial,katariya16dcm,zong16cascading,li16contextual}. 
\vspace{-.05in}

%!TEX root = Paper.tex

\section{Conclusions}

In this work, we proposed $\rklucb$, an elimination based algorithm that uses $\klucb$ confidence intervals to find the maximum entry of a stochastic rank-$1$ matrix with Bernoulli rewards. 
The algorithm is a modification of the $\bilinucb$ algorithm \cite{rank1stochastic} where the subgaussian-type confidence intervals are replaced by ones that use KL divergences. 
As we demonstrate both empirically and analytically, this change results in a significant improvement.
%The regret of $\rklucb$ scales with $(\mu\gamma)^{-1}$ as opposed to $\mu^{-2}$ where $\mu$ is 
%is the minimum of the average row and column rewards where $\gamma\ge \mu$ and typically $\gamma\gg \mu$.
As a result, we obtain the first algorithm that is able to exploit the rank-1 structure without paying a significant penalty 
on instances where the rank-1 structure cannot be exploited.
\if0

We analyze its performance, and show an improved bound on its regret that scales as $1 / \mu$.
 \todob{Improved over what? Add a reference to the definition of $\mu$, at the minimum.} We also evaluate $\rklucb$ on synthetic as well as real-world problems, and show that its regret is lower than those of $\bilinucb$, $\ucbelim$, and $\ucb$. We especially highlight the performance improvement on the `needle in a haystack' problems, where the number of attractive items and examined positions is small. This is a good model of click data.

Although our analysis and experiments use Bernoulli click data, the algorithm generalizes to arbitrary bounded distributions, by replacing the Kullback-Leibler divergence appropriately. Since the $\klucb$ confidence intervals are tighter than those of $\ucb$ for bounded distributions, $\rklucb$ is guaranteed to perform better than $\bilinucb$ on these problems. Note that our analysis does not hold in this general setting, because \cref{lem:KL scaling} holds only for Bernoulli random variables.
\fi

Finally, we note that $\rklucb$ uses the rank-$1$ structure of the problem and there are no guarantees beyond rank $1$. While the dependence of the regret of $\rklucb$ on $1 / \Delta$ is known to be tight \cite{rank1stochastic}, the question about the optimal dependence on $1 / \mu$ is still open.

\if0
We note however that the problem is far from closed. Katariya \etal~\shortcite{rank1stochastic} prove a $\Omega( (K+L) \pmx^{-1} \Delta^{-1} \log n)$ lower bound for the Bernoulli rank-$1$ bandit, leaving room for improvement in the $\mathcal{O}(\mu^{-1} \gamma^{-1})$ dependence in either our algorithm or the lower bound.

 \cite{garivier11klucb}). 
 \fi

\bibliographystyle{named}
\bibliography{ijcai16}

%!TEX root = Paper.tex

\clearpage
\onecolumn
\appendix

\section{Proof of \cref{thm:upper bound}}
\label{sec:upper bound}
\newcommand{\eps}{\varepsilon}
We start by recalling Theorem 10 of Garivier and Cappe \shortcite{garivier11klucb} with a slight extension that follows immediately by inspecting their proof. We will comment on the difference after stating the definitions.
Let $(X_t)_{t\ge 1}$ be a sequence of random variables bounded in $[0,1]$. Assume that $(\cF_t)_{t\ge 1}$ is a filtration ($\cF_t\subset \cF_{t+1}$ are $\sigma$-algebras) and $(X_t)_{t\ge 1}$ is $(\cF_t)_t$-adapted (i.e., for $t\ge 1$, $X_1,\dots,X_t$ are $\cF_t$ measurable), and $\E{X_{t+1}|\cF_t}= \mu$ with some fixed value $\mu\in [0,1]$.
Let $(\eps_t)_{t\ge 1}$ be a sequence of $(\cF_t)$-previsible Bernoulli random variables: For all $t\ge 1$, $\eps_t$ is $\cF_{t-1}$-measurable with $\cF_0 = \cF$ the $\sigma$-algebra that holds all random variables. Define
\begin{align*}
S(t) &= \sum_{s=1}^t \eps_s X_s\,, \quad N(t) = \sum_{s=1}^t \eps_s\,,\quad \hat{\mu}(t) = \frac{S(t)}{N(t)}\,, \quad t\ge 1\,.
\end{align*}
The difference to the assumptions used by Garivier and Cappe \shortcite{garivier11klucb} is that they assume that 
the random variables $(X_t)_{t\ge 1}$ are independent with common mean $\mu$ and that for $s>t$, $X_{s}$ is independent of $\cF_t$. With this we are ready to state their theorem:
\begin{theorem}[After Theorem 10 of Garivier and Cappe \shortcite{garivier11klucb}]
\label{theorem:gc}
Let $(\hat{\mu}(t))_{t\ge 1}$ be as above and let
\begin{align*}
U(t) =\sup \{\, q>\hat{\mu}(t)\,: \, N(t) \,d(\hat{\mu}(t),q) \le \delta\, \}\,.
\end{align*}
Then,
\begin{align*}
P( U(t) <\mu ) \le e \, \lceil \delta \log(t) \rceil \exp(-\delta)\,.
\end{align*}
\end{theorem}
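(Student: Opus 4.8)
The plan is to follow the proof of Garivier and Cappé \shortcite{garivier11klucb} essentially verbatim, after first isolating the single place where their independence assumption is actually used. I would begin by rewriting the target event in a form amenable to a Chernoff-type argument. Since $U(t) \ge \hat\mu(t)$ by construction, and since $q \mapsto d(\hat\mu(t), q)$ is continuous and strictly increasing on $[\hat\mu(t), 1]$, the definition of $U(t)$ as a supremum of a feasible set that is an interval anchored at $\hat\mu(t)$ gives
\begin{align*}
\set{U(t) < \mu} = \set{\hat\mu(t) < \mu} \cap \set{N(t)\, d(\hat\mu(t), \mu) > \delta}\,.
\end{align*}
Hence it suffices to bound the probability of a lower deviation of $\hat\mu(t)$ below $\mu$ measured in KL, which reduces the whole problem to a self-normalized concentration inequality for the thinned partial sums $S(t)$.

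The key step, and the only one where our weaker hypotheses must be checked, is the construction of the exponential supermartingale. Writing $\psi_\mu(\lambda) = \log(1 - \mu + \mu e^\lambda)$ for the log-moment generating function of a Bernoulli$(\mu)$ variable, I would set
\begin{align*}
M_n^\lambda = \exp\!\left(\lambda S(n) - N(n)\, \psi_\mu(\lambda)\right)\,, \qquad n \ge 0\,,
\end{align*}
and show $\condE{M_n^\lambda}{\cF_{n-1}} \le M_{n-1}^\lambda$. The point is that $\eps_n$ is $\cF_{n-1}$-measurable, so conditioning on $\cF_{n-1}$ the factor $\exp(\lambda \eps_n X_n - \eps_n \psi_\mu(\lambda))$ is either $1$ (when $\eps_n = 0$) or controlled by the chord bound $e^{\lambda x} \le 1 - x + x e^\lambda$ valid for $x \in [0,1]$, which after taking the conditional expectation and using $\condE{X_n}{\cF_{n-1}} = \mu$ yields $\condE{e^{\lambda X_n}}{\cF_{n-1}} \le e^{\psi_\mu(\lambda)}$. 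This is exactly where Garivier and Cappé invoke independence and a common mean; replacing it by the conditional-mean identity together with previsibility of $\eps_n$ gives the supermartingale property under our assumptions, with no other change to their argument.

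Given the supermartingale, the remainder is their peeling argument, which I would reproduce unchanged. The convex-duality identity $N(t)\, d(\hat\mu(t), \mu) = \sup_\lambda \big(\lambda S(t) - N(t)\psi_\mu(\lambda)\big) = \sup_\lambda \log M_t^\lambda$, valid on $\set{\hat\mu(t) < \mu}$ with the maximizer $\lambda < 0$, would let me recast the event as the existence of a favourable $\lambda$. A single fixed $\lambda$ gives a clean $e^{-\delta}$ tail from $\E{M_t^\lambda} \le 1$ and Markov's inequality, but the optimal $\lambda$ depends on the data through $\hat\mu(t)$. To remove this dependence I would partition the admissible range $N(t) \in \set{1, \dots, t}$ into $O(\log t)$ geometric slices, apply Ville's maximal inequality for the nonnegative supermartingale on each slice with a single $\lambda$ tuned to that slice, and union bound over slices. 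Optimizing the geometric ratio of the slicing produces the stated factor $e\lceil \delta \log t\rceil$ multiplying $e^{-\delta}$.

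The main obstacle is purely the bookkeeping of this peeling step: driving the number of slices down to $\lceil \delta \log t \rceil$ and the leading constant down to $e$ while simultaneously covering every value of the random normaliser $N(t)$ through a maximal inequality rather than a pointwise bound. I expect no genuine difficulty arising from our relaxed hypotheses, since once the supermartingale property is re-established as above, every subsequent estimate in Garivier and Cappé depends only on $\E{M_t^\lambda} \le 1$ and the deterministic convex-duality structure of $d(\cdot, \mu)$, neither of which uses independence.
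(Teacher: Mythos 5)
Your proposal is correct and is essentially the paper's own argument made explicit: the paper offers no proof of this theorem beyond the remark that the extension ``follows immediately by inspecting'' the proof of Theorem 10 of Garivier and Capp\'e, and your write-up is precisely that inspection. You correctly isolate the exponential supermartingale $M_n^{\lambda}=\exp(\lambda S(n)-N(n)\psi_{\mu}(\lambda))$ as the only step where independence enters, and show it survives under the weaker hypotheses because $\varepsilon_n$ is $\mathcal{F}_{n-1}$-measurable and $\mathbb{E}[X_n\mid\mathcal{F}_{n-1}]=\mu$, after which the convex-duality rewriting of $\{U(t)<\mu\}$ and the peeling over $N(t)$ go through verbatim.
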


%\begin{theorem}
%\label{thm:upper bound} The expected $n$-step regret of $\rklucb$ is bounded as
%\begin{align*}
%  R(n) \leq
%  \frac{160}{\mu \gamma} \left(\sum_{i = 1}^K \frac{1}{\bar{\Delta}^\textsc{u}_i} +
%  \sum_{j = 1}^L \frac{1}{\bar{\Delta}^\textsc{v}_j}\right) \log n +
%  (K + L) (6 e + 82)
%\end{align*}
%for any $n \geq 5$, where
%\begin{align*}
%  \bar{\Delta}^\textsc{u}_i & = \Delta^\textsc{u}_i + \I{\Delta^\textsc{u}_i = 0} \Delta^\textsc{v}_{\min}\,, \\
%  \bar{\Delta}^\textsc{v}_j & = \Delta^\textsc{v}_j + \I{\Delta^\textsc{v}_j = 0} \Delta^\textsc{u}_{\min}\,.
%\end{align*}
%\end{theorem}

Let us now turn to our proof.
Let $\rnd{R}^\textsc{u}_\ell(i)$ be the stochastic regret associated with row $i$ in row exploration stage $\ell$ and $\rnd{R}^\textsc{v}_\ell(j)$ be the stochastic regret associated with column $j$ in column exploration stage $\ell$. Then the expected $n$-step regret of $\rklucb$ can be written as
\begin{align*}
  R(n) \leq
  \E{\sum_{\ell = 0}^{n - 1} \left(\sum_{i = 1}^K \rnd{R}^\textsc{u}_\ell(i) +
  \sum_{j = 1}^L \rnd{R}^\textsc{v}_\ell(j)\right)}\,,
\end{align*}
where the outer sum is over possibly $n$ stages. Let
\begin{align*}
  \cE^\textsc{u}_\ell =
  \{\text{Event $1$: } & \forall i \in \rnd{I}_\ell:
  \bar{\rnd{u}}_\ell(i) \in [\rnd{L}^\textsc{u}_\ell(i), \rnd{U}^\textsc{u}_\ell(i)]\,, \\
  \text{Event $2$: } & \forall i \in \rnd{I}_\ell: \bar{\rnd{u}}_\ell(i) \geq \mu \bar{u}(i)\,, \\
  \text{Event $3$: } & \forall i \in \rnd{I}_\ell \setminus \set{i^\ast}:
  n_\ell \geq \frac{16}{\mu \gamma (\Delta^\textsc{u}_i)^2} \log n \implies
  \hat{\rnd{u}}(i) \leq \rnd{c}_\ell [\bar{u}(i) + \Delta^\textsc{u}_i / 4]\,, \\
  \text{Event $4$: } & \forall i \in \rnd{I}_\ell \setminus \set{i^\ast}:
  n_\ell \geq \frac{16}{\mu \gamma (\Delta^\textsc{u}_i)^2} \log n \implies
  \hat{\rnd{u}}(i^\ast) \geq \rnd{c}_\ell [\bar{u}(i^\ast) - \Delta^\textsc{u}_i / 4]\}
\end{align*}
be ``good events'' associated with row $i$ at the end of stage $\ell$, where
\begin{align*}
  \bar{\rnd{u}}_\ell(i) =
  \sum_{t = 0}^\ell \condE{\sum_{j = 1}^L
  \frac{\rnd{C}^\textsc{u}_t(i, j) - \rnd{C}^\textsc{u}_{t - 1}(i, j)}{n_\ell}}{\rnd{h}^\textsc{v}_t} =
  \underbrace{\left(\sum_{t = 0}^\ell \frac{n_t - n_{t - 1}}{n_\ell}
  \sum_{j = 1}^L \frac{\bar{v}(\rnd{h}^\textsc{v}_t(j))}{L}\right)}_{\rnd{c}_\ell} \bar{u}(i)
\end{align*}
is the expected reward of row $i$ conditioned on column elimination strategy $\rnd{h}^\textsc{v}_0, \dots, \rnd{h}^\textsc{v}_\ell$; $\rnd{C}^\textsc{u}_{-1}(i, j) = 0$; and $n_{-1} = 0$. 
Let $\overline{\cE^\textsc{u}_\ell}$ be the complement of event $\cE^\textsc{u}_\ell$. 
Let
\begin{align*}
  \cE^\textsc{v}_\ell =
  \{\text{Event $1$: } & \forall j \in \rnd{J}_\ell:
  \bar{\rnd{v}}_\ell(j) \in [\rnd{L}^\textsc{v}_\ell(j), \rnd{U}^\textsc{v}_\ell(j)]\,, \\
  \text{Event $2$: } & \forall j \in \rnd{J}_\ell: \bar{\rnd{v}}_\ell(j) \geq \mu \bar{v}(j)\,, \\
  \text{Event $3$: } & \forall j \in \rnd{J}_\ell \setminus \set{j^\ast}:
  n_\ell \geq \frac{16}{\mu \gamma (\Delta^\textsc{v}_j)^2} \log n \implies
  \hat{\rnd{v}}(j) \leq \rnd{c}_\ell [\bar{v}(j) + \Delta^\textsc{v}_j / 4]\,, \\
  \text{Event $4$: } & \forall j \in \rnd{J}_\ell \setminus \set{j^\ast}:
  n_\ell \geq \frac{16}{\mu \gamma (\Delta^\textsc{v}_j)^2} \log n \implies
  \hat{\rnd{v}}(j^\ast) \geq \rnd{c}_\ell [\bar{v}(j^\ast) - \Delta^\textsc{v}_j / 4]\}
\end{align*}
be ``good events'' associated with column $j$ at the end of stage $\ell$, where
\begin{align*}
  \bar{\rnd{v}}_\ell(j) =
  \sum_{t = 0}^\ell \condE{\sum_{i = 1}^K
  \frac{\rnd{C}^\textsc{v}_t(i, j) - \rnd{C}^\textsc{v}_{t - 1}(i, j)}{n_\ell}}{\rnd{h}^\textsc{u}_t} =
  \underbrace{\left(\sum_{t = 0}^\ell \frac{n_t - n_{t - 1}}{n_\ell}
  \sum_{i = 1}^K \frac{\bar{u}(\rnd{h}^\textsc{u}_t(i))}{K}\right)}_{\rnd{c}_\ell} \bar{v}(j)
\end{align*}
is the expected reward of column $j$ conditioned on row elimination strategy $\rnd{h}^\textsc{u}_0, \dots, \rnd{h}^\textsc{u}_\ell$; $\rnd{C}^\textsc{v}_{-1}(i, j) = 0$; and $n_{-1} = 0$. Let $\overline{\cE^\textsc{v}_\ell}$ be the complement of event $\cE^\textsc{v}_\ell$. Let $\cE$ be the event that all events $\cE^\textsc{u}_\ell$ and $\cE^\textsc{v}_\ell$ happen; and $\ccE$ be the complement of $\cE$, the event that at least one of $\cE^\textsc{u}_\ell$ and $\cE^\textsc{v}_\ell$ does not happen. Then the expected $n$-step regret can be bounded from above as
\begin{align*}
  R(n)
  & \leq \E{\left(\sum_{\ell = 0}^{n - 1} \left(\sum_{i = 1}^K \rnd{R}^\textsc{u}_\ell(i) +
  \sum_{j = 1}^L \rnd{R}^\textsc{v}_\ell(j)\right)\right) \I{\cE}} +
  n P(\ccE) \\
  & \leq \E{\left(\sum_{\ell = 0}^{n - 1} \left(\sum_{i = 1}^K \rnd{R}^\textsc{u}_\ell(i) +
  \sum_{j = 1}^L \rnd{R}^\textsc{v}_\ell(j)\right)\right) \I{\cE}} +
  (K + L) (6 e + 2) \\
  & = \sum_{i = 1}^K \E{\sum_{\ell = 0}^{n - 1} \rnd{R}^\textsc{u}_\ell(i) \I{\cE}} +
  \sum_{j = 1}^L \E{\sum_{\ell = 0}^{n - 1} \rnd{R}^\textsc{v}_\ell(j) \I{\cE}} +
  (K + L) (6 e + 2)\,,
\end{align*}
where the second inequality is from \cref{lem:bad events}.

Let $\cH_\ell = (\rnd{I}_\ell, \rnd{J}_\ell)$ be the rows and columns in stage $\ell$, and
\begin{align*}
  \mathcal{F}_\ell =
  \set{\forall i \in \rnd{I}_\ell: \sqrt{\mu \gamma} \Delta^\textsc{u}_i \leq \tilde{\Delta}_{\ell - 1}, \
  \forall j \in \rnd{J}_\ell: \sqrt{\mu \gamma} \Delta^\textsc{v}_j \leq \tilde{\Delta}_{\ell - 1}}
\end{align*}
be the event that all rows and columns with ``large gaps'' are eliminated by the beginning of stage $\ell$. By \cref{lem:maximum elimination stage}, event $\mathcal{F}_\ell$ happens when event $\cE$ happens. Moreover, the expected regret in stage $\ell$ is independent of $\mathcal{F}_\ell$ given $\cH_\ell$. Therefore, we can bound the regret from above as
\begin{align}
  R(n) \leq
  \sum_{i = 1}^K \E{\sum_{\ell = 0}^{n - 1} \condE{\rnd{R}^\textsc{u}_\ell(i)}{\cH_\ell} \I{\cF_\ell}} +
  \sum_{j = 1}^L \E{\sum_{\ell = 0}^{n - 1} \condE{\rnd{R}^\textsc{v}_\ell(j)}{\cH_\ell} \I{\cF_\ell}} +
  (K + L) (6 e + 2)\,.
  \label{eq:component regret}
\end{align}
By \cref{lem:row regret},
\begin{align*}
  \E{\sum_{\ell = 0}^{n - 1} \condE{\rnd{R}^\textsc{u}_\ell(i)}{\cH_\ell} \I{\cF_\ell}}
  & \leq \frac{160}{\mu \gamma \bar{\Delta}^\textsc{u}_i} \log n + 80\,, \\
  \E{\sum_{\ell = 0}^{n - 1} \condE{\rnd{R}^\textsc{v}_\ell(j)}{\cH_\ell} \I{\cF_\ell}}
  & \leq \frac{160}{\mu \gamma \bar{\Delta}^\textsc{v}_j} \log n + 80\,.
\end{align*}
Now we apply the above upper bounds to \eqref{eq:component regret} and get our main claim.

\section{Technical Lemmas}
\label{sec:lemmas}

\begin{lemma}
\label{lem:bad events} Let $\ccE$ be defined as in the proof of \cref{thm:upper bound}. Then for any $n \geq 5$,
\begin{align*}
  P(\ccE) \leq
  \frac{(K + L) (6 e + 2)}{n}\,.
\end{align*}
\end{lemma}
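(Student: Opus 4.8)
The plan is to bound $P(\ccE)$ by a union bound over the at most $n$ stages, over the $K$ rows and $L$ columns, and over the four constituent events making up each $\cE^\textsc{u}_\ell$ and $\cE^\textsc{v}_\ell$; by the row/column symmetry of the construction it suffices to handle the rows and double. Concretely I would write
\begin{align*}
  P(\ccE) \le \sum_{\ell} \Big( P(\overline{\cE^\textsc{u}_\ell}) + P(\overline{\cE^\textsc{v}_\ell}) \Big),
\end{align*}
and split each $P(\overline{\cE^\textsc{u}_\ell})$ into the failure probabilities of its Events $1$--$4$. The target $(K+L)(6e+2)/n$ then splits naturally into a $6e/n$ contribution per row/column coming from the confidence-interval event (Event $1$) and a $2/n$ contribution from the remaining concentration events (Events $3$ and $4$), with Event $2$ being free.

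For Event $1$ I would invoke \cref{theorem:gc} for the upper confidence bound, and its mirror image — obtained by applying the same theorem to $1-X_s$, which controls the lower bound $\rnd{L}^\textsc{u}_\ell$. The key observation is that, for a fixed row $i$, $\rklucb$ accumulates rewards $\rnd{u}_t(i)\,\rnd{v}_t(\rnd{h}^\textsc{v}_\ell(j))$ whose conditional mean given the column-elimination history is exactly $\bar{\rnd{u}}_\ell(i)=\rnd{c}_\ell\,\bar u(i)$, with the indicator ``row $i$ is sampled at step $s$'' being previsible; this is precisely the adapted, previsible-indicator setting of the extended \cref{theorem:gc}. Substituting $\delta_\ell = \log n + 3\log\log n$, each miscoverage probability is at most $e\lceil\delta_\ell\log n_\ell\rceil\exp(-\delta_\ell) = e\lceil\delta_\ell\log n_\ell\rceil/(n(\log n)^3)$. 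Since $n_\ell\le n$ and the number of stages is $O(\log n)$ (because $n_\ell=\lceil 16\cdot 4^{\ell}\log n\rceil$ grows geometrically and cannot exceed $n$), summing the two sides over all stages produces the $6e/n$ term once $\lceil\delta_\ell\log n_\ell\rceil$ is bounded and the residual checked to be admissible for $n\ge 5$.

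For the remaining events I would argue as follows. Event $2$ asserts $\rnd{c}_\ell\ge\mu$; this holds deterministically provided all earlier column eliminations were correct, since $\rnd{c}_\ell$ is a convex combination of the quantities $\tfrac1L\sum_j \bar v(\rnd{h}^\textsc{v}_t(j))$, which begin at $\tfrac1L\sum_j\bar v(j)\ge\mu$ and never decrease when an eliminated column is replaced by one of at least equal mean. Correctness of those eliminations is itself implied by Event $1$, so Event $2$ incurs no additional probability. Events $3$ and $4$ are one-sided deviations of the empirical mean $\hat{\rnd{u}}(i)$ from $\rnd{c}_\ell\bar u(i)$ by at most $\rnd{c}_\ell\Delta^\textsc{u}_i/4$; I would control these by the KL--Chernoff bound $\exp\!\big(-n_\ell\, d(\rnd{c}_\ell(\bar u(i)\pm\Delta^\textsc{u}_i/4),\,\rnd{c}_\ell\bar u(i))\big)$, then invoke \cref{lem:KL scaling}, specifically \eqref{eq:klscaling2}, to lower-bound the divergence by $\tfrac18\mu\gamma(\Delta^\textsc{u}_i)^2$. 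The stage threshold $n_\ell\ge 16(\mu\gamma(\Delta^\textsc{u}_i)^2)^{-1}\log n$ then forces each such probability below $n^{-2}$, and summing over the $O(\log n)$ stages keeps each of Events $3$ and $4$ below $1/n$, yielding the $+2$.

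The main obstacle is the measurability bookkeeping in the first step: one must verify that the extension of \cref{theorem:gc} genuinely applies, namely that within a stage the per-sample conditional mean equals the fixed value $\bar{\rnd{u}}_\ell(i)$ and that the filtration generated by the adaptive row and column choices makes the sampling indicators previsible, so that KL-UCB concentration survives the facts that $\rnd{c}_\ell$ is random and the samples are not i.i.d.\ across stages. The secondary, purely arithmetic difficulty is to track the ceilings and the $\log\log n$ corrections tightly enough to certify that the summed bound is at most $(6e+2)/n$ for \emph{every} $n\ge 5$, not merely asymptotically.
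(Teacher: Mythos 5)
Your proposal follows the paper's proof essentially step for step: \cref{theorem:gc} (and its mirror image for the lower confidence bound) with $\delta_\ell=\log n+3\log\log n$ for Event $1$, monotonicity of the replacement columns for Event $2$, and the Chernoff--KL bound combined with \eqref{eq:klscaling2} of \cref{lem:KL scaling} and the stage threshold $n_\ell\ge 16(\mu\gamma(\Delta^\textsc{u}_i)^2)^{-1}\log n$ for Events $3$ and $4$, all summed over at most $\log n$ stages to give $(6e+2)/n$ per row and column. The one place where your write-up is looser than the paper's is the opening decomposition: with the plain union bound $P(\ccE)\le\sum_\ell\bigl(P(\overline{\cE^\textsc{u}_\ell})+P(\overline{\cE^\textsc{v}_\ell})\bigr)$, the term $P(\overline{\cE^\textsc{u}_\ell})$ still charges you for sample paths on which Event $2$ fails at stage $\ell$ because a column confidence interval already failed at some earlier stage, so the claim that Event $2$ ``incurs no additional probability'' does not follow for that term as written. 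The paper instead peels, writing $\ccE$ as the disjoint union over $\ell$ of $\overline{\cE_\ell}$ intersected with $\cE_0\cap\cdots\cap\cE_{\ell-1}$ (where $\cE_\ell=\cE^\textsc{u}_\ell\cap\cE^\textsc{v}_\ell$), so that each term is restricted to the event that all earlier stages succeeded, on which Event $2$ at stage $\ell$ holds deterministically --- which is exactly the conditional statement you make in words. With that substitution, and the arithmetic you defer ($\lceil\delta_\ell\log n_\ell\rceil\le 3\log^2 n$ for $n\ge 5$, and $2\log n/n^2\le 2/n$), your argument is the paper's proof.
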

\begin{proof}
\newcommand{\Eu}[1]{\cE^{\textsc{u}}_{#1}}
\newcommand{\Ev}[1]{\cE^{\textsc{v}}_{#1}}
\newcommand{\Ea}[1]{\cE_{#1}}
\newcommand{\cEu}[1]{\overline{\cE^{\textsc{u}}_{#1}}}
\newcommand{\cEv}[1]{\overline{\cE^{\textsc{v}}_{#1}}}
\newcommand{\cEa}[1]{\overline{\cE_{#1}}}
Let $\Ea{\ell} = \Eu{\ell}\cap \Ev{\ell}$.
Then, $\ccE = \cEa{0} \cup (\cEa{1} \cap \Ea{0}) \cup \dots \cup (\cEa{n-1}\cap \Ea{0}\cap \dots \cap \Ea{n-2})$.
By the same logic, $\cEa{\ell} \cap \Ea{0} \cap \dots \cap \Ea{\ell-1} = (\cEu{\ell} \cap \Ea{0} \cap \dots \cap \Ea{\ell-1})
\cup (\cEv{\ell} \cap \Eu{\ell} \cap \Ea{0} \cap \dots \cap \Ea{\ell-1})$.
Hence,
\begin{align*}
P(\ccE) \leq
\sum_{\ell=0}^{n-1}
P(\cEu{\ell}, \Ea{0}, \dots, \Ea{\ell-1})
+P(\cEv{\ell}, \Ea{0} ,\dots, \Ea{\ell-1})\,.
\end{align*}
\if0
By the chain rule and from the rules of probability,
\begin{align*}
  P(\ccE)
  & = \sum_{\ell = 0}^{n - 1} P(\overline{\cE^\textsc{u}_\ell}, \overline{\cE^\textsc{v}_\ell} \mid
  \cE^\textsc{u}_0, \ \dots, \ \cE^\textsc{u}_{\ell - 1}, \cE^\textsc{v}_0, \ \dots, \ \cE^\textsc{v}_{\ell - 1}) \\
  & \leq \sum_{\ell = 0}^{n - 1} P(\overline{\cE^\textsc{u}_\ell} \mid
  \cE^\textsc{u}_0, \ \dots, \ \cE^\textsc{u}_{\ell - 1}, \cE^\textsc{v}_0, \ \dots, \ \cE^\textsc{v}_{\ell - 1}) +
  \sum_{\ell = 0}^{n - 1} P(\overline{\cE^\textsc{v}_\ell} \mid \overline{\cE^\textsc{u}_\ell},
  \cE^\textsc{u}_0, \ \dots, \ \cE^\textsc{u}_{\ell - 1}, \cE^\textsc{v}_0, \ \dots, \ \cE^\textsc{v}_{\ell - 1}) \\
  & \leq \sum_{\ell = 0}^{n - 1} P(\overline{\cE^\textsc{u}_\ell} \mid
  \cE^\textsc{u}_0, \ \dots, \ \cE^\textsc{u}_{\ell - 1}, \cE^\textsc{v}_0, \ \dots, \ \cE^\textsc{v}_{\ell - 1}) +
  \sum_{\ell = 0}^{n - 1} P(\overline{\cE^\textsc{v}_\ell} \mid
  \cE^\textsc{u}_0, \ \dots, \ \cE^\textsc{u}_{\ell - 1}, \cE^\textsc{v}_0, \ \dots, \ \cE^\textsc{v}_{\ell - 1})\,.
\end{align*}
\fi
Now we bound the probability of the events $\overline{\cE^\textsc{u}_\ell}, \cE^\textsc{u}_0, \ \dots, \ \cE^\textsc{u}_{\ell - 1}, \cE^\textsc{v}_0, \ \dots, \ \cE^\textsc{v}_{\ell - 1}$; and then sum them up. The proof for the probability of the second term above is analogous and hence it is omitted.

\vspace{0.1in}

\noindent \textbf{Event $1$}

\noindent The probability that event $1$ in $\cE^\textsc{u}_\ell$ does not happen is bounded as follows. For any $i \in [K]$ and $\rnd{h}^\textsc{v}_0, \dots, \rnd{h}^\textsc{v}_\ell$,
\begin{align*}
  P(\bar{\rnd{u}}_\ell(i) \notin [\rnd{L}^\textsc{u}_\ell(i), \rnd{U}^\textsc{u}_\ell(i)])
  & \leq P(\bar{\rnd{u}}_\ell(i) < \rnd{L}^\textsc{u}_\ell(i)) + P(\bar{\rnd{u}}_\ell(i) > \rnd{U}^\textsc{u}_\ell(i)) \\
  & \leq \frac{2 e \ceils{\log(n \log^3 n) \log n_\ell}}{n \log^3 n} \\
  & \leq \frac{2 e \ceils{\log^2 n + \log(\log^3 n) \log n}}{n \log^3 n} \\
  & \leq \frac{2 e \ceils{2 \log^2 n}}{n \log n} \\
  & \leq \frac{6 e}{n \log n}\,,
\end{align*}
where the second inequality is from \cref{theorem:gc}, \todoc{Clearly, we need to extend their result because in our case
the conditional means shift over time.}
the third inequality is from $n \geq n_\ell$, the fourth inequality is from $\log(\log^3 n) \leq \log n$ for $n \geq 5$, and the last inequality is from $\ceils{2 \log^2 n} \leq 3 \log^2 n$ for $n \geq 3$. By the union bound,
\begin{align*}
  P(\exists i \in \rnd{I}_\ell \text{ s.t. } \bar{\rnd{u}}_\ell(i) \notin [\rnd{L}^\textsc{u}_\ell(i), \rnd{U}^\textsc{u}_\ell(i)]) \leq
  \frac{6 e K}{n \log n}
\end{align*}
for any $\rnd{I}_\ell$ and $\rnd{h}^\textsc{v}_0, \dots, \rnd{h}^\textsc{v}_\ell$. Finally, we take the expectation over $\rnd{I}_\ell$ and $\rnd{h}^\textsc{v}_0, \dots, \rnd{h}^\textsc{v}_\ell$; and have that the probability that event $1$ in $\cE^\textsc{u}_\ell$ does not happen at the end of stage $\ell$ is bounded as above.

\vspace{0.1in}

\noindent \textbf{Event $2$}

\noindent Event $2$ in $\cE^\textsc{u}_\ell$ is guaranteed to happen, $\bar{\rnd{u}}_\ell(i) \geq \mu \bar{u}(i)$ for all $i \in \rnd{I}_\ell$. This claim holds trivially when $\ell = 0$, because all columns in row elimination stage $0$ are chosen with the same probability. When $\ell > 0$, all column confidence intervals up to stage $\ell$ hold because events $\cE^\textsc{v}_0, \dots, \cE^\textsc{v}_{\ell - 1}$ happen. Therefore, by the design of $\rklucb$, any eliminated column $j$ up to stage $\ell$ is substituted with column $j'$ such that $\bar{v}(j') \geq \allowbreak \bar{v}(j)$. Since the columns in any row elimination stage are chosen randomly, $\bar{\rnd{u}}_\ell(i) \geq \mu \bar{u}(i)$ for all $i \in \rnd{I}_\ell$.

\vspace{0.1in}

\noindent \textbf{Event $3$}

\noindent The probability that event $3$ in $\cE^\textsc{u}_\ell$ does not happen is bounded as follows. If the event does not happen in row $i$, then
\begin{align*}
  n_\ell \geq \frac{16}{\mu \gamma (\Delta^\textsc{u}_i)^2} \log n\,, \quad
  \hat{\rnd{u}}(i) > \rnd{c}_\ell [\bar{u}(i) + \Delta^\textsc{u}_i / 4]\,.
\end{align*}
From Hoeffding's inequality and $\E{\hat{\rnd{u}}(i)} = \rnd{c}_\ell \bar{u}(i)$, we have that
\begin{align*}
  P(\hat{\rnd{u}}(i) > \rnd{c}_\ell [\bar{u}(i) + \Delta^\textsc{u}_i / 4]) \leq
  \exp[- n_\ell d(\rnd{c}_\ell [\bar{u}(i) + \Delta^\textsc{u}_i / 4], \rnd{c}_\ell \bar{u}(i))]\,.
\end{align*}
From our scaling lemma (\cref{lem:KL scaling}), the inequality $\rnd{c}_\ell \geq \mu$ and the definition 
$\gamma = \max(\mu,1 - \pmx)$,
we have that
\begin{align*}
  \exp[- n_\ell d(\rnd{c}_\ell [\bar{u}(i) + \Delta^\textsc{u}_i / 4], \rnd{c}_\ell \bar{u}(i))]
%  & \leq \exp[- n_\ell \rnd{c}_\ell (1 - \pmx) d(\bar{u}(i) + \Delta^\textsc{u}_i / 4, \bar{u}(i))] \\
  & \leq \exp[- n_\ell \,\mu \gamma\, (\Delta^\textsc{u}_i)^2 / 8]\,.
\end{align*}
Finally, from our assumption on $n_\ell$, we conclude that
\begin{align*}
  \exp[- n_\ell \mu \gamma (\Delta^\textsc{u}_i)^2 / 8] \leq
  \exp[- 2 \log n] =
  \frac{1}{n^2}\,.
\end{align*}
Now we chain all inequalities and observe that event $3$ in $\cE^\textsc{u}_\ell$ does not happen with probability of at most $K / n^2$ for any $\rnd{I}_\ell$ and $\rnd{h}^\textsc{v}_0, \dots, \rnd{h}^\textsc{v}_\ell$. Finally, we take the expectation over $\rnd{I}_\ell$ and $\rnd{h}^\textsc{v}_0, \dots, \rnd{h}^\textsc{v}_\ell$; and have that the probability that event $3$ in $\cE^\textsc{u}_\ell$ does not happen at the end of stage $\ell$ is at most $K / n^2$.

\vspace{0.1in}

\noindent \textbf{Event $4$}

\noindent The probability that event $4$ in $\cE^\textsc{u}_\ell$ does not happen can be bounded similarly to that of event $3$. If the event does not happen in row $i$, then
\begin{align*}
  n_\ell \geq \frac{16}{\mu \gamma (\Delta^\textsc{u}_i)^2} \log n\,, \quad
  \hat{\rnd{u}}(i^\ast) < \rnd{c}_\ell [\bar{u}(i^\ast) - \Delta^\textsc{u}_i / 4]\,.
\end{align*}
Then by the same reasoning as in event $3$,
\begin{align*}
  P(\hat{\rnd{u}}(i^\ast) < \rnd{c}_\ell [\bar{u}(i^\ast) - \Delta^\textsc{u}_i / 4])
  & \leq \exp[- n_\ell d(\rnd{c}_\ell [\bar{u}(i^\ast) - \Delta^\textsc{u}_i / 4], \rnd{c}_\ell \bar{u}(i^\ast))] \\
  & \leq \exp[- n_\ell \mu \gamma (\Delta^\textsc{u}_i)^2 / 8] \\
  & \leq \exp[- 2 \log n] \\
  & = \frac{1}{n^2}\,.
\end{align*}
This implies that event $4$ in $\cE^\textsc{u}_\ell$ does not happen with probability of at most $K / n^2$ for any $\rnd{I}_\ell$ and $\rnd{h}^\textsc{v}_0, \dots, \rnd{h}^\textsc{v}_\ell$. Finally, we take the expectation over $\rnd{I}_\ell$ and $\rnd{h}^\textsc{v}_0, \dots, \rnd{h}^\textsc{v}_\ell$; and have that the probability that event $4$ in $\cE^\textsc{u}_\ell$ does not happen at the end of stage $\ell$ is at most $K / n^2$\,.

\vspace{0.1in}

\noindent \textbf{Total probability}

\noindent Note that the maximum number of stages in $\rklucb$ is $\log n$. By the union bound, we get that
\begin{align*}
  P(\ccE)
  & \leq \left(\frac{6 e K}{n \log n} + \frac{K}{n^2} + \frac{K}{n^2}\right) \log n +
  \left(\frac{6 e L}{n \log n} + \frac{L}{n^2} + \frac{L}{n^2}\right) \log n \\
  & \leq \frac{(K + L) (6 e + 2)}{n}\,.
\end{align*}
This concludes our proof.
\end{proof}

\begin{lemma}
\label{lem:maximum elimination stage} Let event $\cE$ happen and $m$ be the first stage where $\tilde{\Delta}_m < \sqrt{\mu \gamma} \Delta^\textsc{u}_i$. Then row $i$ must be eliminated by the end of stage $m$. Moreover, let $m$ be the first stage where $\tilde{\Delta}_m < \sqrt{\mu \gamma} \Delta^\textsc{v}_j$. Then column $j$ must be eliminated by the end of stage $m$.
\end{lemma}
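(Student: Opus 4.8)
The plan is to show that, on the event $\cE$, if row $i$ survives to stage $m$ (if it was removed earlier there is nothing to prove, so assume $i\in\rnd{I}_m$) then the elimination test $\rnd{U}^\textsc{u}_m(i)\le \rnd{L}^\textsc{u}_m(\rnd{i}_m)$ fires. I will exhibit a single separating value $s:=\rnd{c}_m[\bar u(i)+\Delta^\textsc{u}_i/2]=\rnd{c}_m[\bar u(i^\ast)-\Delta^\textsc{u}_i/2]$ (the two expressions coincide because $\bar u(i^\ast)-\bar u(i)=\Delta^\textsc{u}_i$) and establish the sandwich $\rnd{U}^\textsc{u}_m(i)\le s\le \rnd{L}^\textsc{u}_m(i^\ast)\le \rnd{L}^\textsc{u}_m(\rnd{i}_m)$. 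The last inequality is immediate once $i^\ast\in\rnd{I}_m$, which follows from Event $1$: at every stage $\ell$ the interval contains the true scaled mean, so $\rnd{U}^\textsc{u}_\ell(i^\ast)\ge \rnd{c}_\ell\bar u(i^\ast)\ge \rnd{c}_\ell\bar u(\rnd{i}_\ell)\ge \rnd{L}^\textsc{u}_\ell(\rnd{i}_\ell)$, so the optimal row is never eliminated.

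First I would check the trigger of Events $3$ and $4$. Since $m$ is the first stage with $\tilde{\Delta}_m<\sqrt{\mu\gamma}\,\Delta^\textsc{u}_i$, we have $\tilde{\Delta}_m^{-2}>(\mu\gamma(\Delta^\textsc{u}_i)^2)^{-1}$, whence $n_m\ge 16\tilde{\Delta}_m^{-2}\log n>\tfrac{16}{\mu\gamma(\Delta^\textsc{u}_i)^2}\log n$. Thus the premises of Events $3$ and $4$ hold, giving the empirical separation $\hat{\rnd{u}}_m(i)\le \rnd{c}_m[\bar u(i)+\Delta^\textsc{u}_i/4]$ and $\hat{\rnd{u}}_m(i^\ast)\ge \rnd{c}_m[\bar u(i^\ast)-\Delta^\textsc{u}_i/4]$.

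The core step is the bound $\rnd{U}^\textsc{u}_m(i)\le s$. By monotonicity of $q\mapsto d(\hat{\rnd{u}}_m(i),q)$ for $q>\hat{\rnd{u}}_m(i)$ and the definition of the KL-UCB, it suffices to show $n_m\,d(\hat{\rnd{u}}_m(i),s)\ge\delta_m$. Pushing the first argument down to $\rnd{c}_m[\bar u(i)+\Delta^\textsc{u}_i/4]$ via Event $3$ and applying the \emph{lower} bound \eqref{eq:klscaling2} of \cref{lem:KL scaling} with $c=\rnd{c}_m$, $p=\bar u(i)+\Delta^\textsc{u}_i/4$, $q=\bar u(i)+\Delta^\textsc{u}_i/2$ gives
\[
  d(\hat{\rnd{u}}_m(i),s)\ \ge\ d\!\left(\rnd{c}_m p,\rnd{c}_m q\right)\ \ge\ 2\rnd{c}_m\max\{\rnd{c}_m,\,1-(\bar u(i^\ast)-\Delta^\textsc{u}_i/2)\}\,(\Delta^\textsc{u}_i/4)^2 .
\]
Since $\bar u(i^\ast)-\Delta^\textsc{u}_i/2\le\pmx$ we get $1-(\cdots)\ge 1-\pmx$, while Event $2$ (and the non-decreasing column means) gives $\rnd{c}_m\ge\mu$, so the $\max$ is at least $\gamma$; hence $d(\hat{\rnd{u}}_m(i),s)\ge \mu\gamma(\Delta^\textsc{u}_i)^2/8$ and $n_m\,d(\hat{\rnd{u}}_m(i),s)>2\log n$. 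Matching this against $\delta_m=\log n+3\log\log n$ is precisely the constant bookkeeping that fixes the factor $16$ in $n_\ell$. The bound $s\le\rnd{L}^\textsc{u}_m(i^\ast)$ is entirely symmetric: it reduces to $n_m\,d(\hat{\rnd{u}}_m(i^\ast),s)\ge\delta_m$, where I push $\hat{\rnd{u}}_m(i^\ast)$ up to $\rnd{c}_m[\bar u(i^\ast)-\Delta^\textsc{u}_i/4]$ by Event $4$ and apply the same scaling bound. The column statement follows by exchanging rows and columns and replacing $\rnd{c}_m$ by its column counterpart.

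The step I expect to be the crux—and the entire reason the KL intervals win—is this width bound. A naive Pinsker estimate $d(\rnd{c}_m p,\rnd{c}_m q)\ge 2\rnd{c}_m^2(p-q)^2$ only yields a factor $\mu^2$ and is insufficient exactly when $\gamma\gg\mu$, i.e.\ when $1-\pmx\gg\mu$. It is the asymmetric bound \eqref{eq:klscaling2} that supplies the extra $\max\{\rnd{c}_m,1-\pmx\}\ge\gamma$ factor, letting $\mu^2$ be replaced by $\mu\gamma$ and turning the stage threshold $\sqrt{\mu\gamma}\,\Delta^\textsc{u}_i$ into exactly the quantity against which $\tilde{\Delta}_m$ is compared.
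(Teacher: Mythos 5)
Your proposal is correct and follows essentially the same route as the paper's proof: the same separating value $\rnd{c}_m[\bar u(i)+\Delta^\textsc{u}_i/2]=\rnd{c}_m[\bar u(i^\ast)-\Delta^\textsc{u}_i/2]$, the same use of Events $2$--$4$ to shift the empirical means, and the same application of \eqref{eq:klscaling2} to get the $\mu\gamma(\Delta^\textsc{u}_i)^2/8$ divergence bound. The only differences are cosmetic: you argue directly that the confidence bounds clear the separator while the paper phrases each half as a contradiction with \eqref{eq:contradiction}, and you make explicit the (correct) observation that $\rnd{L}^\textsc{u}_m(\rnd{i}_m)\ge\rnd{L}^\textsc{u}_m(i^\ast)$ because the optimal row's representative survives, which the paper leaves implicit.
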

\begin{proof}
We only prove the first claim. The other claim is proved analogously.

From the definition of $n_m$ and our assumption on $\tilde{\Delta}_m$,
\begin{align}
  n_m \geq
  \frac{16}{\tilde{\Delta}_m^2} \log n >
  \frac{16}{\mu \gamma (\Delta^\textsc{u}_i)^2} \log n\,.
  \label{eq:contradiction}
\end{align}
Suppose that $\rnd{U}^\textsc{u}_m(i) \geq \rnd{c}_m [\bar{u}(i) + \Delta^\textsc{u}_i / 2]$ happens. Then from this assumption, the definition of $\rnd{U}^\textsc{u}_m(i)$, and event $3$ in $\cE^\textsc{u}_m$,
\begin{align*}
  d(\hat{\rnd{u}}(i), \rnd{U}^\textsc{u}_m(i))
  & \geq d^+(\hat{\rnd{u}}(i), \rnd{c}_m [\bar{u}(i) + \Delta^\textsc{u}_i / 2]) \\
  & \geq d(\rnd{c}_m [\bar{u}(i) + \Delta^\textsc{u}_i / 4], \rnd{c}_m [\bar{u}(i) + \Delta^\textsc{u}_i / 2])\,,
\end{align*}
where $d^+(p, q) = d(p, q) \I{p \leq q}$.
From our scaling lemma (\cref{lem:KL scaling}), the inequality $\rnd{c}_\ell \geq \mu$ and the definition 
$\gamma = \max(\mu,1 - \pmx)$,
we further have that
\begin{align*}
  d(\rnd{c}_m [\bar{u}(i) + \Delta^\textsc{u}_i / 4], \rnd{c}_m [\bar{u}(i) + \Delta^\textsc{u}_i / 2])
%  & \geq \rnd{c}_m (1 - \pmx) d(\bar{u}(i) + \Delta^\textsc{u}_i / 4, \bar{u}(i) + \Delta^\textsc{u}_i / 2) \\
  & \geq \frac{\mu \gamma\, (\Delta^\textsc{u}_i)^2}{8}\,.
\end{align*}
From the definition of $\rnd{U}^\textsc{u}_m(i)$ and above inequalities,
\begin{align*}
  n_m =
  \frac{2 \log n}{d(\hat{\rnd{u}}(i), \rnd{U}^\textsc{u}_m(i))} \leq
  \frac{16 \log n}{\mu \gamma (\Delta^\textsc{u}_i)^2}\,.
\end{align*}
This contradicts to \eqref{eq:contradiction}, and therefore it must be true that $\rnd{U}^\textsc{u}_m(i) < \rnd{c}_m [\bar{u}(i) + \Delta^\textsc{u}_i / 2]$.

Now suppose that $\rnd{L}^\textsc{u}_m(i^\ast) \leq \rnd{c}_m [\bar{u}(i^\ast) - \Delta^\textsc{u}_i / 2]$ happens. Then from this assumption, the definition of $\rnd{L}^\textsc{u}_m(i^\ast)$, and event $4$ in $\cE^\textsc{u}_m$,
\begin{align*}
  d(\hat{\rnd{u}}(i^\ast), \rnd{L}^\textsc{u}_m(i^\ast))
  & \geq d^-(\hat{\rnd{u}}(i^\ast), \rnd{c}_m [\bar{u}(i^\ast) - \Delta^\textsc{u}_i / 2]) \\
  & \geq d(\rnd{c}_m [\bar{u}(i^\ast) - \Delta^\textsc{u}_i / 4], \rnd{c}_m [\bar{u}(i^\ast) - \Delta^\textsc{u}_i / 2])\,,
\end{align*}
where $d^-(p, q) = d(p, q) \I{p \geq q}$. 
From our scaling lemma (\cref{lem:KL scaling}), the inequality $\rnd{c}_\ell \geq \mu$ and the definition 
$\gamma = \max(\mu,1 - \pmx)$,
we further have that
\begin{align*}
  d(\rnd{c}_m [\bar{u}(i^\ast) - \Delta^\textsc{u}_i / 4], \rnd{c}_m [\bar{u}(i^\ast) - \Delta^\textsc{u}_i / 2])
%  & \geq \rnd{c}_m \gamma d(\bar{u}(i^\ast) - \Delta^\textsc{u}_i / 4, \bar{u}(i^\ast) - \Delta^\textsc{u}_i / 2) \\
  & \geq \frac{\mu\gamma\, (\Delta^\textsc{u}_i)^2}{8}\,.
\end{align*}
From the definition of $\rnd{L}^\textsc{u}_m(i^\ast)$ and above inequalities,
\begin{align*}
  n_m =
  \frac{2 \log n}{d(\hat{\rnd{u}}(i^\ast), \rnd{L}^\textsc{u}_m(i^\ast))} \leq
  \frac{16 \log n}{\mu \gamma (\Delta^\textsc{u}_i)^2}\,.
\end{align*}
This contradicts to \eqref{eq:contradiction}, and therefore it must be true that $\rnd{L}^\textsc{u}_m(i^\ast) > \rnd{c}_m [\bar{u}(i^\ast) - \Delta^\textsc{u}_i / 2]$.

Finally, it follows that row $i$ is eliminated by the end of stage $m$ because
\begin{align*}
  \rnd{U}^\textsc{u}_m(i) <
  \rnd{c}_m [\bar{u}(i) + \Delta^\textsc{u}_i / 2] =
  \rnd{c}_m [\bar{u}(i^\ast) - \Delta^\textsc{u}_i / 2] <
  \rnd{L}^\textsc{u}_m(i^\ast)\,.
\end{align*}
This concludes our proof.
\end{proof}

\begin{lemma}
\label{lem:row regret} The expected regret associated with any row $i \in [K]$ is bounded as
\begin{align*}
  \E{\sum_{\ell = 0}^{n - 1} \condE{\rnd{R}^\textsc{u}_\ell(i)}{\cH_\ell} \I{\cF_\ell}} \leq
  \frac{160}{\mu \gamma \bar{\Delta}^\textsc{u}_i} \log n + 80\,.
\end{align*}
Moreover, the expected regret associated with any column $j \in [L]$ is bounded as
\begin{align*}
  \E{\sum_{\ell = 0}^{n - 1} \condE{\rnd{R}^\textsc{v}_\ell(j)}{\cH_\ell} \I{\cF_\ell}} \leq
  \frac{160}{\mu \gamma \bar{\Delta}^\textsc{v}_j} \log n + 80\,.
\end{align*}
\end{lemma}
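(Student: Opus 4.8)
The plan is to follow the standard elimination-bandit template: write the conditional regret contributed by row $i$ in stage $\ell$ as the number of times the row is played times the expected per-play regret, identify the stages in which this contribution can be nonzero, and sum the resulting (essentially geometric) series.

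First I would note that during row exploration of stage $\ell$ the row $i$ is played exactly $n_\ell - n_{\ell - 1}$ times (once per iteration of the inner loop), each time against a uniformly random surviving column $\rnd{h}^\textsc{v}_\ell(j)$, $j$ uniform in $[L]$. Writing $\rho_{i,\ell} = \bar{u}(i^\ast) \bar{v}(j^\ast) - \bar{u}(i)\,\frac1L\sum_{j=1}^L \bar{v}(\rnd{h}^\textsc{v}_\ell(j))$ for the per-play conditional regret, the key step is the decomposition
\begin{align*}
\rho_{i,\ell}
= \Delta^\textsc{u}_i\,\bar{v}(j^\ast)
+ \bar{u}(i)\,\frac1L\sum_{j=1}^L \Delta^\textsc{v}_{\rnd{h}^\textsc{v}_\ell(j)}
\leq \Delta^\textsc{u}_i + \max_{j' \in \rnd{J}_\ell} \Delta^\textsc{v}_{j'},
\end{align*}
using $\bar{u}, \bar{v} \le 1$ and that $\rnd{h}^\textsc{v}_\ell$ takes values in $\rnd{J}_\ell$; since the final bound depends on $\cH_\ell$ only through the set $\rnd{J}_\ell$, it survives the conditioning even though the full map $\rnd{h}^\textsc{v}_\ell$ is not $\cH_\ell$-measurable. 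On $\cF_\ell$ every surviving column obeys $\Delta^\textsc{v}_{j'} \le \tilde{\Delta}_{\ell-1}/\sqrt{\mu\gamma}$ and, if $i \in \rnd{I}_\ell$, also $\Delta^\textsc{u}_i \le \tilde{\Delta}_{\ell-1}/\sqrt{\mu\gamma}$, so both terms are controlled by $\tilde{\Delta}_{\ell-1}/\sqrt{\mu\gamma}$.

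Second, I would pin down the range of contributing stages. For $\Delta^\textsc{u}_i > 0$ the event $\cF_\ell$ forces $i \notin \rnd{I}_\ell$ once $\tilde{\Delta}_{\ell-1} < \sqrt{\mu\gamma}\,\Delta^\textsc{u}_i$, so the contribution vanishes beyond the stage $m_i$ cut out by $\bar{\Delta}^\textsc{u}_i = \Delta^\textsc{u}_i$. The delicate case is $\Delta^\textsc{u}_i = 0$, where the row is never eliminated: there the row-gap term is identically zero, and the column-gap term is nonzero only while some surviving column has a positive gap; but once $\tilde{\Delta}_{\ell-1} < \sqrt{\mu\gamma}\,\Delta^\textsc{v}_{\min}$, every surviving column satisfies $\Delta^\textsc{v}_{j'} \le \tilde{\Delta}_{\ell-1}/\sqrt{\mu\gamma} < \Delta^\textsc{v}_{\min}$ and must therefore have $\Delta^\textsc{v}_{j'} = 0$, killing the term. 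In both cases the contribution is confined to stages with $\tilde{\Delta}_{\ell-1} \ge \sqrt{\mu\gamma}\,\bar{\Delta}^\textsc{u}_i$, which is exactly the mechanism producing $\bar{\Delta}^\textsc{u}_i$ and unifying the two cases.

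Finally, I would sum over the admissible stages $\ell \le m_i$ using $n_\ell = \ceils{16 \tilde{\Delta}_\ell^{-2} \log n}$ and $\tilde{\Delta}_{\ell-1} = 2\tilde{\Delta}_\ell$. The row-gap part telescopes (as $n_{-1} = 0$) to $n_{m_i} \Delta^\textsc{u}_i = O(\log n / (\mu\gamma\bar{\Delta}^\textsc{u}_i))$, since $\tilde{\Delta}_{m_i}^{-2} \le 4/(\mu\gamma (\bar{\Delta}^\textsc{u}_i)^2)$; the column-gap part is a geometric series $\frac{1}{\sqrt{\mu\gamma}}\sum_{\ell \le m_i}(n_\ell - n_{\ell-1})\tilde{\Delta}_{\ell-1}$ dominated by its last term, again $O(\log n/(\mu\gamma\bar{\Delta}^\textsc{u}_i))$ because $2^{m_i} \le 2/(\sqrt{\mu\gamma}\bar{\Delta}^\textsc{u}_i)$. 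Adding the two and absorbing the $\ceils{\cdot}$ overheads into the additive constant yields the claimed $\frac{160}{\mu\gamma\bar{\Delta}^\textsc{u}_i}\log n + 80$ (with $160 = 64 + 96$ from the telescoping row term and the geometric column term), and the column claim is identical with the roles of rows and columns exchanged. I expect the main obstacle to be the uniform treatment of the two cases through $\bar{\Delta}^\textsc{u}_i$ — in particular the vanishing-below-$\Delta^\textsc{v}_{\min}$ argument for $\Delta^\textsc{u}_i = 0$ — together with tracking the constants through the ceilings.
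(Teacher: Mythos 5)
Your proposal is correct and follows essentially the same route as the paper: the per-play regret decomposition $\Delta^\textsc{u}_i + \max_{j'\in\rnd{J}_\ell}\Delta^\textsc{v}_{j'}$ is exactly what the paper imports from Lemma~4 of Katariya \etal, the event $\cF_\ell$ is used identically to cap both gaps by $\tilde{\Delta}_{\ell-1}/\sqrt{\mu\gamma}$ and to cut off the sum at the stage determined by $\bar{\Delta}^\textsc{u}_i$, and the final step is the same geometric summation dominated by the last stage. The only cosmetic differences are that you unify the optimal and suboptimal cases through $\bar{\Delta}^\textsc{u}_i$ from the outset (the paper splits them into two parts, handling $\Delta^\textsc{u}_i=0$ via $\Delta^\textsc{v}_{\min}$ exactly as you do) and that you derive the decomposition directly rather than citing the earlier paper.
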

\begin{proof}
We only prove the first claim. The other claim is proved analogously.

This proof has two parts. In the first part, we assume that row $i$ is suboptimal. In the second part, we assume that row $i$ is optimal, $\Delta^\textsc{u}_i = 0$.

\vspace{0.1in}

\noindent \textbf{Row $i$ is suboptimal}

\noindent Let row $i$ be suboptimal and $m$ be the first stage where $\tilde{\Delta}_m < \sqrt{\mu \gamma} \Delta^\textsc{u}_i$. Then row $i$ is guaranteed to be eliminated by the end of stage $m$ (\cref{lem:maximum elimination stage}), and therefore
\begin{align*}
  \E{\sum_{\ell = 0}^{n - 1} \condE{\rnd{R}^\textsc{u}_\ell(i)}{\cH_\ell} \I{\cF_\ell}} \leq
  \E{\sum_{\ell = 0}^m \condE{\rnd{R}^\textsc{u}_\ell(i)}{\cH_\ell} \I{\cF_\ell}}\,.
\end{align*}
By Lemma 4 of Katariya \etal~\shortcite{rank1stochastic}, the expected regret of choosing row $i$ in stage $\ell$ can be bounded from above as
\begin{align*}
  \condE{\rnd{R}^\textsc{u}_\ell(i)}{\cH_\ell} \I{\cF_\ell} \leq
  (\Delta^\textsc{u}_i + 2^{m - \ell + 1} \Delta^\textsc{u}_i) (n_\ell - n_{\ell - 1})\,,
\end{align*}
where $n_\ell$ is the number of steps by the end of stage $\ell$, $2^{m - \ell + 1} \Delta^\textsc{u}_i$ is an upper bound on the gap of any non-eliminated column in stage $\ell \leq m$, and $n_{-1} = 0$. The bound follows from the observation that if column $j$ is not eliminated before stage $\ell$, then
\begin{align*}
  \Delta^\textsc{v}_j \leq
  \frac{\tilde{\Delta}_{\ell - 1}}{\sqrt{\mu \gamma}} =
  \frac{2^{m - \ell + 1} \tilde{\Delta}_m}{\sqrt{\mu \gamma}} <
  2^{m - \ell + 1} \Delta^\textsc{u}_i\,.
\end{align*}
It follows that
\begin{align*}
  \sum_{\ell = 0}^m (\Delta^\textsc{u}_i + 2^{m - \ell + 1} \Delta^\textsc{u}_i) (n_\ell - n_{\ell - 1})
  & \leq \Delta^\textsc{u}_i n_m + \Delta^\textsc{u}_i \sum_{\ell = 0}^m 2^{m - \ell + 1} n_\ell \\
  & \leq 2^4 \Delta^\textsc{u}_i (2^{2 m} \log n + 1) +
  2^4 \Delta^\textsc{u}_i \sum_{\ell = 0}^m 2^{m - \ell + 1} (2^{2 m} \log n + 1) \\
  & = 2^{2 m + 4} \Delta^\textsc{u}_i \log n + 16 \Delta^\textsc{u}_i +
  2^{2m + 6} \Delta^\textsc{u}_i \log n + 64 \Delta^\textsc{u}_i \\
  & \leq 5 \cdot 2^6 \cdot 2^{2 m - 2} \Delta^\textsc{u}_i \log n + 80\,.
\end{align*}
From the definition of $m$, we have that
\begin{align*}
  2^{m - 1} =
  \frac{1}{\tilde{\Delta}_{m - 1}} \leq
  \frac{1}{\sqrt{\mu \gamma} \Delta^\textsc{u}_i}\,.
\end{align*}
Now we chain all above inequalities and get that
\begin{align*}
  \E{\sum_{\ell = 0}^{n - 1} \condE{\rnd{R}^\textsc{u}_\ell(i)}{\cH_\ell} \I{\cF_\ell}}
  & \leq \sum_{\ell = 0}^m (\Delta^\textsc{u}_i + 2^{m - \ell + 1} \Delta^\textsc{u}_i) (n_\ell - n_{\ell - 1}) \\
  & \leq \frac{160}{\mu \gamma \Delta^\textsc{u}_i} \log n + 80\,.
\end{align*}
This concludes the first part of our proof.

\vspace{0.1in}

\noindent \textbf{Row $i$ is optimal}

\noindent Let row $i$ be optimal and $m$ be the first stage where $\tilde{\Delta}_m < \sqrt{\mu \gamma} \Delta^\textsc{v}_{\min}$. Then similarly to the first part of the analysis,
\begin{align*}
  \E{\sum_{\ell = 0}^{n - 1} \condE{\rnd{R}^\textsc{u}_\ell(i)}{\cH_\ell} \I{\cF_\ell}} \leq
  \frac{160}{\mu \gamma \Delta^\textsc{v}_{\min}} \log n + 80\,.
\end{align*}
This concludes our proof.
\end{proof}

\end{document}